\newtheorem{proposition}{Proposition}
\newtheorem{remark}{Remark}
\providecommand{\eref}[1]{\eqref{#1}}  
\providecommand{\cref}[1]{Chapter~\ref{#1}}
\providecommand{\fref}[1]{Figure~\ref{#1}}
\providecommand{\R}{\ensuremath{\mathbb{R}}}
\providecommand{\C}{\ensuremath{\mathbb{C}}}
\providecommand{\E}{\ensuremath{\mathbb{E}}}
\providecommand{\bydef}{\overset{\text{def}}{=}}
\renewcommand{\vec}[1]{\ensuremath{\boldsymbol{#1}}}
\providecommand{\mat}[1]{\ensuremath{\boldsymbol{#1}}}
\providecommand{\calN}{\mathcal{N}}
\providecommand{\mI}{\mat{I}}
\providecommand{\mP}{\mat{P}}
\providecommand{\mS}{\mat{S}}
\providecommand{\vb}{\vec{b}}
\providecommand{\vp}{\vec{p}}
\providecommand{\vq}{\vec{q}}
\providecommand{\vv}{\vec{v}}
\providecommand{\vx}{\vec{x}}
\providecommand{\vy}{\vec{y}}
\providecommand{\mPsi}{\mat{\Psi}}
\providecommand{\mSigma}{\mat{\Sigma}}
\providecommand{\mTheta}{\mat{\Theta}}
\providecommand{\vepsilon}{\vec{\epsilon}}
\providecommand{\vvarepsilon}{\vec{\varepsilon}}
\providecommand{\vmu}{\vec{\mu}}
\providecommand{\vvartheta}{\vec{\vartheta}}
\providecommand{\vxbar}{\boldsymbol{\overline{x}}}
\providecommand{\vzero}{\vec{0}}
\newcommand{\defequal}{\mathop{\overset{\mbox{\tiny{def}}}{=}}}
\newcommand{\subjectto}{\mathop{\mathrm{subject\, to}}}
\newcommand{\argmin}[1]{\mathop{\underset{#1}{\mathrm{arg\,min}}}}
\newcommand{\argmax}[1]{\mathop{\underset{#1}{\mbox{argmax}}}}
\newcommand{\maximize}[1]{\mathop{\underset{#1}{\mathrm{maximize}}}}
\title{Adaptive Image Denoising by Mixture Adaptation}
\author{Enming Luo,~\IEEEmembership{Student Member,~IEEE}, Stanley~H.~Chan,~\IEEEmembership{Member,~IEEE}, and Truong~Q.~Nguyen,~\IEEEmembership{Fellow,~IEEE}
\thanks{E. Luo and T. Nguyen are with the Department of Electrical and Computer Engineering, University of California at San Diego, La Jolla, CA 92093, USA. (emails: eluo@ucsd.edu and nguyent@ece.ucsd.edu).}
\thanks{S. Chan is with the School of Electrical and Computer Engineering, and the Department of Statistics, Purdue University, West Lafayette, IN 47907, USA. (email: stanleychan@purdue.edu).}
\thanks{This work was supported by the National Science Foundation under grant CCF-1065305 and by DARPA under contract W911NF-11-C-0210. Preliminary material in this paper was presented at the 3rd IEEE Global Conference on Signal \& Information Processing (GlobalSIP), Orlando, December, 2015.}
\thanks{This paper follows the concept of reproducible research. All the results and examples presented in the paper are reproducible using the code and images available online at http://videoprocessing.ucsd.edu/\texttildelow{}eluo/files/publications/guidedDenoising.zip.}
}
\begin{document}
\maketitle

\begin{abstract}
We propose an adaptive learning procedure to learn patch-based image priors for image denoising. The new algorithm, called the Expectation-Maximization (EM) adaptation, takes a generic prior learned from a generic external database and adapts it to the noisy image to generate a specific prior. Different from existing methods that combine internal and external statistics in ad-hoc ways, the proposed algorithm is rigorously derived from a Bayesian hyper-prior perspective. There are two contributions of this paper: First, we provide full derivation of the EM adaptation algorithm and demonstrate methods to improve the computational complexity. Second, in the absence of the latent clean image, we show how EM adaptation can be modified based on pre-filtering. Experimental results show that the proposed adaptation algorithm yields consistently better denoising results than the one without adaptation and is superior to several state-of-the-art algorithms.
\end{abstract}

\begin{keywords}
Image Denoising, Hyper Prior, Conjugate Prior, Gaussian Mixture Models, Expectation-Maximization (EM), Expected Patch Log-Likelihood (EPLL), EM Adapation, BM3D
\end{keywords}

\section{Introduction}
\subsection{Overview}
We consider the classical image denoising problem: Given an additive i.i.d. Gaussian noise model,
\begin{equation}
\vy = \vx + \vvarepsilon,
\label{noisy image model}
\end{equation}
our goal is to find an estimate of $\vx$ from $\vy$, where $\vx \in \R^n$ denotes the (unknown) clean image, $\vvarepsilon \sim
\calN(\vzero, \sigma^2\mI) \in \R^n$ denotes the Gaussian noise vector with zero mean and covariance matrix $\sigma^2\mI$ (where $\mI$ is the identity matrix), and $\vy \in \R^n$ denotes the observed noisy image.

Image denoising is a long-standing problem. Over the past few decades, numerous denoising algorithms have been proposed, ranging from spatial domain methods \cite{Tomasi_Manduchi_1998, Buades_Coll_2005_Journal, Kervrann_Boulanger_2007} to transform domain methods \cite{Dabov_Foi_Katkovnik_2007, Zhang_Dong_Zhang_2010, Yan_Shao_Liu_2013}, and from local filtering \cite{Takeda_Farsiu_Milanfar_2007,Luo_Pan_Nguyen_2012,
Luo_Chan_Nguyen_2014} to global optimization \cite{Talebi_Milanfar_2014, Zoran_Weiss_2011}. In this paper, we focus on the Maximum a Posteriori (MAP) approach \cite{Zoran_Weiss_2011, Elad_Aharon_2006}. MAP is a Bayesian approach which tackles image denoising by maximizing the posterior probability
\begin{equation*}
\argmax{\vx}\,f(\vy|\vx)f(\vx) = \argmin{\vx}\, \left\{\frac{1}{2\sigma^2}\|\vy-\vx\|^2 - \log f(\vx)\right\}.
\end{equation*}
Here, the first term is a quadratic function due to the Gaussian noise model. The second term is the negative log of the image prior. The benefit of using the MAP framework is that it allows us to explicitly formulate our prior knowledge about the image via the distribution $f(\vx)$.

The success of an MAP optimization depends vitally on the modeling capability of the prior $f(\vx)$ \cite{Yu_Sapiro_Mallat_2012,Roth_Black_2009,
Roth_Black_2005}. However, seeking $f(\vx)$ for the whole image $\vx$ is practically impossible because of the high dimensionality. To alleviate the problem, we adopt the common wisdom to approximate $f(\vx)$ using a collection of small patches. Such prior is known as the \emph{patch prior}, which is broadly attributed to Buades \emph{et al.} for the non-local means \cite{Buades_Coll_Morel_2005CVPR}, and to an independent work of Awate and Whitaker presented at the same conference \cite{Awate_Whitaker_2005}. (See \cite{Milanfar_2013a} for additional discussions about patch priors.) Mathematically, by letting $\mP_i \in \R^{d \times n}$ be a patch-extract operator that extracts the $i$-th $d$-dimensional patch from the image $\vx$, a patch prior expresses the negative logarithm of the prior as a sum of the logarithms, leading to
\begin{equation}
\argmin{\vx}\, \left\{\frac{1}{2\sigma^2}\|\vy-\vx\|^2 - \frac{1}{n} \sum_{i=1}^n \log f(\mP_i\vx)\right\}.
\label{MAP formulation for patches}
\end{equation}
The prior thus formed is called the expected patch log likelihood (EPLL) \cite{Zoran_Weiss_2011}.

\subsection{Related Work}
Assuming that $f(\mP_i\vx)$ takes a parametric form for analytic tractability, the question now becomes where to find training samples and how to train the model. There are generally two approaches. The first approach is to learn $f(\mP_i\vx)$ from the single noisy image. We refer to these types of priors as \emph{internal} priors, e.g., \cite{Zontak_Irani_2011}. The second approach is to learn $f(\mP_i\vx)$ from a database of images. We call these types of priors as \emph{external} priors, e.g.,  \cite{Mosseri_Zontak_Irani_2013, Burger_Schuler_Harmeling_2013, Schmidt_Roth_2014,Luo_Chan_Nguyen_2014,Luo_Chan_Nguyen_2015}.

Combining internal and external priors has been an active direction in recent years. Most of these methods are based on a \emph{fusion} approach, which attempts to directly aggregate the results of the internal and the external statistics. For example, Mosseri \emph{et al.} \cite{Mosseri_Zontak_Irani_2013} used a patch signal-to-noise ratio as a metric to decide if a patch should be denoised internally or externally; Burger \emph{et al.} \cite{Burger_Schuler_Harmeling_2013} applied a neural network to weight the internal and external denoising results; Yue \emph{et al.} \cite{Yue_Sun_Yang_Wu_2014} used a frequency domain method to fuse the internal and external denoising results. There are also some works attempting to use external databases as a guide to train internal priors \cite{Tibell_Spies_Borga_2009,Chen_Zhang_Yu_2015}.

When $f(\mP_i\vx)$ is a Gaussian mixture model, there are special treatments to optimize the performance, e.g., a framework proposed by Awate and Whitaker \cite{Awate_Whitaker_2005_v1,Awate_Whitaker_2006,Awate_Whitaker_2007}. In this method, a simplified Gaussian mixture (using the same weights and shared covariances) is learned directly from the noisy data through an empirical Bayes framework. However, the method primarily focuses on MRI data where the noise is Rician. This is different from the i.i.d. Gaussian noise assumption in our problem. The learning procedure is also different from ours as we use an adaptation process to adapt the generic prior to a specific prior. Our proposed method is inspired by the work of Gauvain and Lee \cite{Gauvain_Lee_1994} with a few important modifications.

We should also mention the work of Weissman \emph{et al.} on universal denoising \cite{Weissman_Ordentlich_Seroussi_2005,Sivaramakrishnan_Weissman_2009}. Universal denoisers are a general class of denoising algorithms that do not require explicit knowledge about the prior and are also asymptotically optimal. While not explicitly proven, patch-based denoising methods such as non-local means \cite{Buades_Coll_Morel} and BM3D \cite{Dabov_Foi_Katkovnik_2007} satisfy these properties. For example, the asymptotic optimality of non-local means was empirically verified by Levin \emph{et al.} \cite{Levin_Nadler_2011,Levin_Nadler_Durand_Freeman_2012} with computational improvements by Chan \emph{et al.} \cite{Chan_Zickler_Lu_2014}. However, we shall not discuss universal denoisers in detail as they are beyond the scope of this paper.

\subsection{Contribution and Organization}
Our proposed algorithm is call \emph{EM adaptation}. Like many external methods, we assume that we have an external database of images for training. However, we do not simply compute the statistics of the external database. Instead, we use the external statistics as a ``guide'' for learning the internal statistics. As will be illustrated in the subsequent sections, this can be formally done using a Bayesian framework.

This paper is an extension of our previous work reported in \cite{Chan_Luo_Nguyen_2015}. This paper adds the following two new contributions:
\begin{enumerate}
\item Derivation of the EM adaptation algorithm. We rigorously derive the proposed EM adaptation algorithm from a Bayesian hyper-prior perspective. Our derivation complements the work of Gauvain and Lee \cite{Gauvain_Lee_1994} by providing additional simplifications and justifications to reduce computational complexity. We further provide discussion of the convergence.

\item Handling of noisy data. We provide detailed discussion of how to perform EM adaptation for noisy images. In particular, we demonstrate how to automatically adjust the internal parameters of the algorithm using pre-filtered images.
\end{enumerate}

When this paper was written, we became aware of a very recent work by Lu \emph{et al.} \cite{Lu_Lin_Jin_Yang_Wang_2015}.
Compared to \cite{Lu_Lin_Jin_Yang_Wang_2015}, this paper provides theoretical results that are lacking in \cite{Lu_Lin_Jin_Yang_Wang_2015}. Numerical comparisons can be found in the experiment section.

The rest of the paper is organized as follows: Section II gives a brief review of the Gaussian mixture model. Section III presents the proposed EM adaptation algorithm. Section IV discusses how the EM adaptation algorithm should be modified when the image is noisy. Experimental results are presented in Section V.

\section{Mathematical Preliminaries}

\subsection{GMM and MAP Denoising}
For notational simplicity, we shall denote $\vp_i \bydef \mP_i\vx \in \R^d$ as the $i$-th patch from $\vx$. We say that $\vp_i$ is generated from a Gaussian mixture model (GMM) if
\begin{equation}
f(\vp_i \,|\, \mTheta) = \sum_{k=1}^K \; \pi_k \calN(\vp_i | \vmu_k, \mSigma_k),
\label{GMM distribution}
\end{equation}
where $\sum_{k=1}^K \pi_k = 1$ with $\pi_k$ being the weight of the $k$-th Gaussian component, and
\begin{align}
&\calN(\vp_i | \vmu_k, \mSigma_k) \\
&\bydef \frac{1}{(2\pi)^{d/2}|\mSigma_k|^{1/2}} \mbox{exp} \Big( -\frac{1}{2} (\vp_i - \vmu_k)^T \mSigma_k^{-1} (\vp_i - \vmu_k) \Big) \nonumber
\end{align}
is the $k$-th Gaussian distribution with mean $\vmu_k$ and covariance $\mSigma_k$. We denote $\mTheta \bydef \{(\pi_k, \vmu_k,
\mSigma_k)\}_{k=1}^K$ as the GMM parameter.

With the GMM defined in \eref{GMM distribution}, we can specify the denoising procedure by solving the optimization problem in \eref{MAP formulation for patches}. Here, we follow \cite{Geman_Yang_2001,Krishnan_Fergus_2009} by using the half quadratic splitting strategy. The idea is to replace \eref{MAP formulation for patches} with the following minimization
\begin{align}
\argmin{\vx, \{\vv_i\}_{i = 1}^n}
&\Bigg\{\frac{1}{2\sigma^{2}}\|\vy-\vx\|^2 \notag \\
&+ \frac{1}{n} \sum_{i=1}^n \Big( -\log f(\vv_i) + \frac{\beta}{2}\|\mP_i\vx - \vv_i\|^2 \Big) \Bigg\},
\label{MAP formulation: half quadratic splitting}
\end{align}
where $\{\vv_i\}_{i=1}^n$ are some auxiliary variables and $\beta$ is a penalty parameter. By assuming that $f(\vv_i)$ is dominated by the mode of the Gaussian mixture, the solution to \eref{MAP formulation: half quadratic splitting} is given in the following proposition.

\begin{proposition}
\label{MAP denoising using half quadratic splitting strategy}
Assuming $f(\vv_i)$ is dominated by the $k_i^*$-th components, where
$k_i^* \defequal \argmax{k} \, \pi_k \calN(\vv_i|\vmu_k, \mSigma_k)$, the solution of \eref{MAP formulation: half quadratic splitting} is
\begin{align*}
\vx   &= \Big( n\sigma^{-2}\mI + \beta \sum_{i=1}^n \mP_i^T\mP_i\Big)^{-1} \Big( n\sigma^{-2}\vy + \beta \sum_{i=1}^n\mP_i^T\vv_i \Big),\\
\vv_i &= \left( \beta {\mSigma}_{k_i^*} + \mI \right)^{-1} \left( {\vmu}_{k_i^*} + \beta{\mSigma}_{k_i^*} \mP_i\vx \right).
\end{align*}
\end{proposition}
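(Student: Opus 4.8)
The plan is to exploit the fact that, under the stated dominant-component assumption, the objective in \eref{MAP formulation: half quadratic splitting} becomes a block-separable convex quadratic program, so the two update formulas are nothing more than the stationarity conditions of the corresponding subproblems. Concretely, I would argue by block coordinate minimization: the objective couples $\vx$ to the auxiliary variables only through the penalties $\tfrac{\beta}{2n}\|\mP_i\vx - \vv_i\|^2$, so (i) for fixed $\{\vv_i\}$ the $\vx$-subproblem is an unconstrained strictly convex quadratic, and (ii) for fixed $\vx$ the problem decouples across $i$, each $\vv_i$-subproblem being an unconstrained convex quadratic once $-\log f(\vv_i)$ is replaced by the quadratic coming from its dominant Gaussian component. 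In either case, setting the gradient to zero is both necessary and sufficient for the minimizer.

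For the $\vx$-update, I would collect the $\vx$-dependent terms into $g(\vx) = \tfrac{1}{2\sigma^2}\|\vy-\vx\|^2 + \tfrac{\beta}{2n}\sum_{i=1}^n \|\mP_i\vx - \vv_i\|^2$, compute $\nabla g(\vx) = \sigma^{-2}(\vx - \vy) + \tfrac{\beta}{n}\sum_{i=1}^n \mP_i^T(\mP_i\vx - \vv_i)$, set it to zero, and multiply through by $n$ to clear the $1/n$. Rearranging gives $\big(n\sigma^{-2}\mI + \beta\sum_i \mP_i^T\mP_i\big)\vx = n\sigma^{-2}\vy + \beta\sum_i \mP_i^T\vv_i$; the coefficient matrix is invertible since $n\sigma^{-2}\mI$ is positive definite and each $\mP_i^T\mP_i$ is positive semidefinite, which yields the first formula.

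For the $\vv_i$-update, I would use the dominant-component assumption to write $-\log f(\vv_i) = \tfrac{1}{2}(\vv_i - \vmu_{k_i^*})^T\mSigma_{k_i^*}^{-1}(\vv_i - \vmu_{k_i^*}) + c_i$, where $c_i$ collects $-\log \pi_{k_i^*}$ and the Gaussian normalizing constant and is independent of $\vv_i$, hence irrelevant to the minimization. The $i$-th subproblem is then $\min_{\vv_i} \tfrac{1}{2}(\vv_i - \vmu_{k_i^*})^T\mSigma_{k_i^*}^{-1}(\vv_i - \vmu_{k_i^*}) + \tfrac{\beta}{2}\|\mP_i\vx - \vv_i\|^2$, whose gradient is $\mSigma_{k_i^*}^{-1}(\vv_i - \vmu_{k_i^*}) - \beta(\mP_i\vx - \vv_i)$; setting it to zero gives $(\mSigma_{k_i^*}^{-1} + \beta\mI)\vv_i = \mSigma_{k_i^*}^{-1}\vmu_{k_i^*} + \beta\mP_i\vx$. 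Left-multiplying by $\mSigma_{k_i^*}$ (using $\mSigma_{k_i^*}(\mSigma_{k_i^*}^{-1} + \beta\mI) = \beta\mSigma_{k_i^*} + \mI$, which is positive definite hence invertible) produces $\vv_i = (\beta\mSigma_{k_i^*} + \mI)^{-1}(\vmu_{k_i^*} + \beta\mSigma_{k_i^*}\mP_i\vx)$, the second formula.

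The routine parts are the two gradient computations; the only conceptual points needing care are the legitimacy of dropping the additive constant $c_i$ and, more importantly, the mild circularity that $k_i^*$ itself depends on $\vv_i$ through $k_i^* = \argmax{k}\,\pi_k\calN(\vv_i|\vmu_k,\mSigma_k)$. I would address the latter by stating that the formulas are to be read as the fixed point of an alternating scheme: $k_i^*$ is evaluated at the current iterate of $\vv_i$ (equivalently, of $\mP_i\vx$) and then held fixed while the quadratic $\vv_i$-subproblem is solved. I expect no genuine obstacle beyond making this reading explicit and recording the elementary positive-definiteness facts that justify the two matrix inversions.
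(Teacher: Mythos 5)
Your proposal is correct and matches the standard derivation: the paper itself gives no proof, deferring entirely to the citation of Zoran and Weiss, whose argument is exactly the alternating (block-coordinate) minimization you describe --- a strictly convex quadratic in $\vx$ for fixed $\{\vv_i\}$, and, under the dominant-component approximation of $-\log f(\vv_i)$, a decoupled convex quadratic in each $\vv_i$ for fixed $\vx$, with both formulas obtained by setting gradients to zero. Your explicit handling of the invertibility of the two coefficient matrices and of the fixed-$k_i^*$ reading of the $\vv_i$-update are the right points to make precise, and nothing further is needed.
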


\begin{proof}
See \cite{Zoran_Weiss_2011}.
\end{proof}

Proposition \ref{MAP denoising using half quadratic splitting strategy} is a general procedure for denoising images using a GMM under the MAP framework. There are, of course, other possible denoising procedures that also use GMM under the MAP framework, e.g., using surrogate methods \cite{Bouman_2015}. However, we will not elaborate on these options. Our focus is on how to obtain the GMM.

\subsection{EM Algorithm}
The GMM parameter $\mTheta = \{(\pi_k, \vmu_k, \mSigma_k)\}_{k=1}^K$ is typically learned using the Expectation-Maximization (EM) algorithm. EM is a known method. Interested readers can refer to \cite{Gupta_Chen_2011} for a comprehensive tutorial. For image denoising, we note that the EM algorithm has several shortcomings as follows:

\begin{enumerate}
\item \textbf{Adaptivity}. For a fixed image database, the GMM parameters are specifically trained for that particular database. We call it the \emph{generic} parameter. If, for example, we are given an image that does not necessarily belong to the database, then it becomes unclear how one can adapt the generic parameter to the image.
\item \textbf{Computational cost}. Learning a good GMM requires a large number of training samples. For example, the GMM in \cite{Zoran_Weiss_2011} is learned from 2,000,000 randomly sampled patches. If our goal is to adapt a generic parameter to a particular image, then it would be more desirable to bypass the computationally intensive procedure.
\item \textbf{Finite samples}. When training samples are few, the learned GMM will be over-fitted; some components will even become singular. This problem needs to be resolved because a noisy image contains much fewer patches than a database of patches.
\item \textbf{Noise}. In image denoising, the observed image always contains noise. It is not clear how to mitigate the noise while running the EM algorithm.
\end{enumerate}

\section{EM Adaptation}
The proposed EM adaptation takes a generic prior and adapts it to create a specific prior using very few samples. Before giving the details of the EM adaptation, we first provide a toy example to illustrate the idea.

\subsection{Toy Example}
Suppose we are given two two-dimensional GMMs with two clusters in each GMM. From each GMM, we synthetically generate 400 data points with each point representing a 2D coordinate shown in Figure \ref{fig:GMM Adaptation} (a) and (b). Imagine that the data points in (a) come from an external database whereas the data points in (b) come from a clean image of interest.

With the two sets of data, we apply EM to learn GMM 1 and GMM 2. Since we have enough samples, both GMMs are estimated reasonably well as shown in (a) and (b). However, if we reduce the number of points in (b) to 20, then learning GMM 2 becomes problematic as shown in (c). Therefore, the question is this: Suppose we are given GMM 1 and only 20 data points from GMM 2, is there a way that we can transfer GMM 1 to the 20 data points so that we can approximately estimate GMM 2? This is the goal of EM adaptation. A result for this example is shown in (d).

\begin{figure}[t]
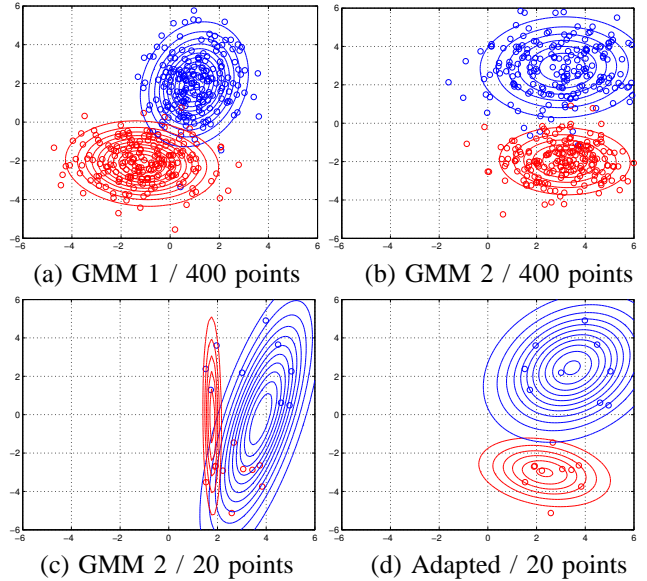

\def\iw{0.475\linewidth}
\begin{tabular}{cc}
\includegraphics[width=\iw]{/toy_problem/GMM1.eps}& \hspace{-3ex}
\includegraphics[width=\iw]{/toy_problem/GMM2.eps}\\
(a) GMM 1 / 400 points & (b) GMM 2 / 400 points \\
\includegraphics[width=\iw]{/toy_problem/GMM2_scratch.eps}& \hspace{-3ex}
\includegraphics[width=\iw]{/toy_problem/aGMM1_1.eps}\\
(c) GMM 2 / 20 points & (d) Adapted / 20 points\\
\end{tabular}
\caption{(a) and (b), Two GMMs, each learned using the EM algorithm from 400 data points of 2D coordinates. (c): A GMM learned from a subset of 20 data points drawn from (b). (d): An adapted GMM using the same 20 data points in (c). Note the significant improvement from (c) to (d) by using the proposed adaptation.}
\label{fig:GMM Adaptation}
\end{figure}

\subsection{Bayesian Hyper-prior}
As illustrated in the toy example, what EM adaptation does is to use the generic model parameters as a ``guide'' when learning the new model parameters. Mathematically, suppose $\{\widetilde{\vp}_1,\ldots,\widetilde{\vp}_n\}$ are patches from a single image parameterized by a GMM with a parameter $\widetilde{\mTheta} \bydef \{(\widetilde{\pi}_k, \widetilde{\vmu}_k, \widetilde{\mSigma}_k)\}_{k=1}^K$. Our goal is to estimate $\widetilde{\mTheta}$ with the aid of some generic GMM parameter $\mTheta$. However, in order to formally derive the algorithm, we need to explain a Bayesian learning framework.

From a Bayesian perspective, estimation of the parameter $\widetilde{\mTheta}$ can be formulated as
\begin{align}
\widetilde{\mTheta}  &= \argmax{\widetilde{\mTheta}} \; \log{f( \widetilde{\mTheta} \,|\, \widetilde{\vp}_1,\ldots,\widetilde{\vp}_n )} \nonumber\\
&= \argmax{\widetilde{\mTheta}} \Big\{ \log{ f(\widetilde{\vp}_1,\ldots,\widetilde{\vp}_n \, |\, \widetilde{\mTheta})} + \log{f(\widetilde{\mTheta})} \Big\},
\label{hyper-prior: MAP estimation problem}
\end{align}
where
$$f
(\widetilde{\vp}_1,\ldots,\widetilde{\vp}_n \,|\, \widetilde{\mTheta})
= \prod_{i = 1}^n \left\{ \sum_{k = 1}^K \;\widetilde{\pi}_k \calN(\widetilde{\vp}_i | \widetilde{\vmu}_k, \widetilde{\mSigma}_k)\right\}
$$
is the joint distribution of the samples, and $f(\widetilde{\mTheta})$ is some prior of $\widetilde{\mTheta}$. We note that \eref{hyper-prior: MAP estimation problem} is also a MAP problem. However, the MAP for \eref{hyper-prior: MAP estimation problem} is the estimation of the model parameter $\widetilde{\mTheta}$, which is different from the MAP for denoising used in \eref{MAP formulation for patches}. Although the difference seems subtle, there is a drastically different implication that we should be aware of.

In \eref{hyper-prior: MAP estimation problem}, $f(\widetilde{\vp}_1, \ldots, \widetilde{\vp}_n \,|\, \widetilde{\mTheta})$ denotes the distribution of a collection of patches conditioned on the parameter $\widetilde{\mTheta}$. It is the likelihood of observing $\{\widetilde{\vp}_1, \ldots, \widetilde{\vp}_n\}$ given the model parameter $\widetilde{\mTheta}$. $f(\widetilde{\mTheta})$ is a distribution of the parameter, which is called hyper-prior in machine learning \cite{Bishop_2006}. Since $\widetilde{\mTheta}$ is the model parameter, the hyper-prior $f(\widetilde{\mTheta})$ defines the probability density of $\widetilde{\mTheta}$.

Same as the usual Bayesian modeling, hyper-priors are chosen according to a subjective belief. However, for efficient computation, hyper-priors are usually chosen as the \emph{conjugate priors} of the likelihood function $f(\widetilde{\vp}_1, \ldots, \widetilde{\vp}_n \,|\, \widetilde{\mTheta})$ so that the posterior distribution $f( \widetilde{\mTheta} \,|\, \widetilde{\vp}_1,\ldots,\widetilde{\vp}_n )$ has the same functional form as the prior distribution. For example; Beta distribution is a conjugate prior for a Bernoulli likelihood function; Gaussian distribution is a conjugate prior for a likelihood function that is also Gaussian, etc. For more discussions on conjugate priors, we refer the readers to \cite{Bishop_2006}.

\subsection{$f(\widetilde{\mTheta})$ for GMM}
For GMM, no joint conjugate prior can be found through the sufficient statistic approach \cite{Gauvain_Lee_1994}. To allow tractable computation, it is necessary to separately model the mixture weights and the means/covariances by assuming that the weights and means/covariances are independent.

We model the mixture weights as a multinomial distribution so that the corresponding conjugate prior for the mixture weight vector $(\widetilde{\pi}_1, \cdots, \widetilde{\pi}_K)$ is a Dirichlet density
\begin{equation}
\widetilde{\pi}_1, \cdots, \widetilde{\pi}_K \; \sim \; \mbox{Dir}(v_1, \cdots, v_k),
\label{Dirichlet density}
\end{equation}
where $v_i > 0$ is a pseudo-count for the Dirichlet distribution.

For mean and covariance $( \widetilde{\vmu}_k, \widetilde{\mSigma}_k )$, the conjugate prior is the normal-inverse-Wishart density so that
\begin{equation}
( \widetilde{\vmu}_k, \widetilde{\mSigma}_k) \sim
\mbox{NIW}(\vvartheta_k, \tau_k, \mPsi_k, \varphi_k), \;\; \mbox{for} \; k = 1, \cdots, K,
\label{normal-inverse-Wishart density}
\end{equation}
where $(\vvartheta_k, \tau_k, \mPsi_k, \varphi_k)$ are the parameters for the normal-inverse-Wishart density such that $\vvartheta_k$ is a vector of dimension $d$, $\tau_k > 0$, $\mPsi_k$ is a $d \times d$ positive definite matrix, and $\varphi_k > d - 1$.

\vspace{1ex}
\begin{remark}
The choice of the normal-inverse-Wishart distribution is important here, for it is the conjugate prior of a multivariate normal distribution with unknown mean and unknown covariance matrix. This choice is slightly different from \cite{Gauvain_Lee_1994} where the authors choose a normal-Wishart distribution. While both normal-Wishart and normal-inverse-Wishart can lead to the same result, the proof using normal-inverse-Wishart is considerably simpler for its inverted matrices.
\end{remark}

\vspace{1ex}

Assuming $\widetilde{\pi}_k$ is independent of $(\widetilde{\vmu}_k,\widetilde{\mSigma}_k)$, we factorize $f(\widetilde{\mTheta})$ as a product of \eref{Dirichlet density} and \eref{normal-inverse-Wishart density}. By ignoring the scaling constants, it is not difficult to show that
\begin{equation}
\begin{array}{l}
f( \widetilde{\mTheta} ) \propto \prod_{k = 1}^K \Big\{\widetilde{\pi}_k^{v_k - 1}|\widetilde{\mSigma}_k|^{-(\varphi_k + d + 2)/ 2} \\
\mbox{exp}\left( -\frac{\tau_k}{2} \left( \widetilde{\vmu}_k - \vvartheta_k \right)^T  \widetilde{\mSigma}_k^{-1}\left( \widetilde{\vmu}_k - \vvartheta_k \right) - \frac{1}{2} \mbox{tr}( \mPsi_k \widetilde{\mSigma}_k^{-1} ) \right) \Big\}.
\end{array}
\label{conjugate prior for GMM}
\end{equation}

The importance of \eref{conjugate prior for GMM} is that it is a conjugate prior of the complete data. As a result, the posterior density $f(\widetilde{\mTheta} | \widetilde{\vp}_1,\ldots,\widetilde{\vp}_n)$ belongs to the same distribution family as $f(\widetilde{\mTheta})$. This can be formally described in Proposition \ref{proposition:hyper-prior for the reproducing density}.

\vspace{1ex}
\begin{proposition}
\label{proposition:hyper-prior for the reproducing density}
Given the prior in \eref{conjugate prior for GMM}, the posterior $f(\widetilde{\mTheta} | \widetilde{\vp}_1,\ldots,\widetilde{\vp}_n)$ is given by
\begin{equation}
\begin{array}{l}
f(\widetilde{\mTheta} \,|\, \widetilde{\vp}_1,\ldots,\widetilde{\vp}_n) \propto \prod_{k = 1}^K \Big\{\widetilde{\pi}_k^{v_k' - 1}|\widetilde{\mSigma}_k|^{-(\varphi_k' + d + 2)/ 2} \\
\mbox{exp}\left( -\frac{\tau_k'}{2} \left( \widetilde{\vmu}_k - \vvartheta_k' \right)^T  \widetilde{\mSigma}_k^{-1}\left( \widetilde{\vmu}_k - \vvartheta_k' \right) - \frac{1}{2} \mbox{tr}( \mPsi_k' \widetilde{\mSigma}_k^{-1} ) \right) \Big\}
\end{array}
\label{hyper-prior for the reproducing density}
\end{equation}
where
\begin{align*}
v_k'            &= v_k + n_k, \;\; \varphi_k' = \varphi_k + n_k, \;\; \tau_k' = \tau_k + n_k,\\
\vvartheta_k'   &= \frac{\tau_k\vvartheta_k + n_k\bar{\vmu}_k}{\tau_k + n_k},\\
\mPsi_k'        &= \mPsi_k + \mS_k + \frac{\tau_k n_k}{\tau_k + n_k} (\vvartheta_k - \bar{\vmu}_k)(\vvartheta_k - \bar{\vmu}_k)^T,\\
\bar{\vmu}_k    &= \frac{1}{n_k} \sum_{i = 1}^n \gamma_{ki} \widetilde{\vp}_i, \;\; \mS_k = \sum_{i = 1}^n\gamma_{ki}(\widetilde{\vp}_i - \bar{\vmu}_k)(\widetilde{\vp}_i - \bar{\vmu}_k)^T
\end{align*}
are the parameters for the posterior density.
\end{proposition}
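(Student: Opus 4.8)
The plan is to establish the claim by a direct computation: form the product of the likelihood and the conjugate prior \eqref{conjugate prior for GMM} and show that, collecting terms one mixture component at a time, the result has the same functional form as \eqref{conjugate prior for GMM} with the stated updated parameters. The immediate difficulty is that the mixture likelihood $f(\widetilde{\vp}_1,\ldots,\widetilde{\vp}_n\,|\,\widetilde{\mTheta})=\prod_i\sum_k\widetilde{\pi}_k\calN(\widetilde{\vp}_i\,|\,\widetilde{\vmu}_k,\widetilde{\mSigma}_k)$ is a sum inside a product and is not in the exponential family, so the product with \eqref{conjugate prior for GMM} does not obviously close up. As the remark preceding the statement anticipates, I would therefore work with the \emph{complete data}: introduce the latent component-assignment indicators $z_{ki}\in\{0,1\}$ with $\sum_k z_{ki}=1$, so that the complete-data likelihood factors as $\prod_{i=1}^n\prod_{k=1}^K[\widetilde{\pi}_k\calN(\widetilde{\vp}_i\,|\,\widetilde{\vmu}_k,\widetilde{\mSigma}_k)]^{z_{ki}}$. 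Replacing each $z_{ki}$ by its E-step expectation, the responsibility $\gamma_{ki}$, and writing $n_k=\sum_i\gamma_{ki}$, the log-likelihood becomes a sum of terms weighted by $\gamma_{ki}$, which is exactly what produces the quantities $\bar{\vmu}_k$ and $\mS_k$ appearing in the statement.

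I would then multiply this (expected) complete-data likelihood by \eqref{conjugate prior for GMM} and match the three groups of factors for each $k$. The $\widetilde{\pi}_k$ factors combine as $\widetilde{\pi}_k^{v_k-1}\cdot\widetilde{\pi}_k^{n_k}=\widetilde{\pi}_k^{v_k+n_k-1}$, giving $v_k'=v_k+n_k$. The determinant factors combine as $|\widetilde{\mSigma}_k|^{-(\varphi_k+d+2)/2}\cdot|\widetilde{\mSigma}_k|^{-n_k/2}$, giving the exponent $-(\varphi_k+n_k+d+2)/2$ and hence $\varphi_k'=\varphi_k+n_k$. Both of these are immediate; the content of the proof is the exponential term.

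For the exponent I would first use the standard sum-of-squares (bias--variance) decomposition $\sum_i\gamma_{ki}(\widetilde{\vp}_i-\widetilde{\vmu}_k)(\widetilde{\vp}_i-\widetilde{\vmu}_k)^T=\mS_k+n_k(\widetilde{\vmu}_k-\bar{\vmu}_k)(\widetilde{\vmu}_k-\bar{\vmu}_k)^T$, so that the data contribute $-\frac{1}{2}\mbox{tr}(\mS_k\widetilde{\mSigma}_k^{-1})-\frac{n_k}{2}(\widetilde{\vmu}_k-\bar{\vmu}_k)^T\widetilde{\mSigma}_k^{-1}(\widetilde{\vmu}_k-\bar{\vmu}_k)$ inside the exponent. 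Adding the prior's quadratic $-\frac{\tau_k}{2}(\widetilde{\vmu}_k-\vvartheta_k)^T\widetilde{\mSigma}_k^{-1}(\widetilde{\vmu}_k-\vvartheta_k)$ and completing the square in $\widetilde{\vmu}_k$ --- two quadratics sharing the weight matrix $\widetilde{\mSigma}_k^{-1}$ merge into one, $-\frac{\tau_k+n_k}{2}(\widetilde{\vmu}_k-\vvartheta_k')^T\widetilde{\mSigma}_k^{-1}(\widetilde{\vmu}_k-\vvartheta_k')$ with $\vvartheta_k'=(\tau_k\vvartheta_k+n_k\bar{\vmu}_k)/(\tau_k+n_k)$ --- produces a $\widetilde{\vmu}_k$-free residual that simplifies to $-\frac{\tau_k n_k}{2(\tau_k+n_k)}(\vvartheta_k-\bar{\vmu}_k)^T\widetilde{\mSigma}_k^{-1}(\vvartheta_k-\bar{\vmu}_k)$. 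Rewriting this residual as $-\frac{1}{2}\mbox{tr}(\frac{\tau_k n_k}{\tau_k+n_k}(\vvartheta_k-\bar{\vmu}_k)(\vvartheta_k-\bar{\vmu}_k)^T\widetilde{\mSigma}_k^{-1})$ and absorbing it together with $-\frac{1}{2}\mbox{tr}(\mPsi_k\widetilde{\mSigma}_k^{-1})$ and $-\frac{1}{2}\mbox{tr}(\mS_k\widetilde{\mSigma}_k^{-1})$ into a single trace against $\widetilde{\mSigma}_k^{-1}$ yields $-\frac{1}{2}\mbox{tr}(\mPsi_k'\widetilde{\mSigma}_k^{-1})$ with $\mPsi_k'=\mPsi_k+\mS_k+\frac{\tau_k n_k}{\tau_k+n_k}(\vvartheta_k-\bar{\vmu}_k)(\vvartheta_k-\bar{\vmu}_k)^T$, while $\tau_k'=\tau_k+n_k$ is read off as the coefficient of the squared-mean term. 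Taking the product over $k=1,\ldots,K$ reassembles \eqref{hyper-prior for the reproducing density}.

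The main obstacle is the bookkeeping in the last step: carrying out the two-term completion of the square correctly and checking that the leftover cross term is exactly the rank-one update appearing in $\mPsi_k'$, with no $\widetilde{\mSigma}_k$-dependence dropped when the three pieces are merged into one trace. Everything else --- the reduction to the complete-data likelihood and the $\widetilde{\pi}_k$ and $|\widetilde{\mSigma}_k|$ bookkeeping --- is routine; structurally the proof is a single Gaussian/inverse-Wishart conjugacy identity applied independently within each mixture component, together with the multinomial/Dirichlet conjugacy for the weights.
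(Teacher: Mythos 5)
Your proposal is correct and follows essentially the same route as the paper's proof in Appendix A: replace the intractable mixture likelihood by the expected complete-data log-likelihood (the Q function with responsibilities $\gamma_{ki}$), apply the sum-of-squares decomposition into $\mS_k$ plus the rank-one term in $\widetilde{\vmu}_k-\bar{\vmu}_k$, and complete the square in $\widetilde{\vmu}_k$ against the normal-inverse-Wishart prior to read off $v_k',\varphi_k',\tau_k',\vvartheta_k',\mPsi_k'$. The paper performs exactly this computation, writing $f(\widetilde{\mTheta}\,|\,\widetilde{\vp}_1,\ldots,\widetilde{\vp}_n)\propto\mbox{exp}\big(Q(\widetilde{\mTheta}\,|\,\widetilde{\mTheta}^{(m)})+\log f(\widetilde{\mTheta})\big)$, so your observation that the claim really concerns the EM surrogate rather than the exact mixture posterior applies equally to the paper's own argument.
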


\begin{proof}
See Appendix \ref{appendix:proof,hyper-prior for the reproducing density}.
\end{proof}
\vspace{1ex}

\subsection{Solve for $\widetilde{\mTheta}$}
Solving for the optimal $\widetilde{\mTheta}$ is equivalent to solving the following optimization problem:
\begin{equation}
\begin{array}{ll}
\maximize{\widetilde{\mTheta}}  &\quad L(\widetilde{\mTheta}) \bydef\log{f(\widetilde{\mTheta} | \widetilde{\vp}_1,\ldots,\widetilde{\vp}_n)} \\
\subjectto      & \quad \sum_{k = 1}^K \widetilde{\pi}_k = 1.
\end{array}
\label{optimization for mTheta}
\end{equation}
The constrained problem \eref{optimization for mTheta} can be solved by considering the Lagrange function and taking derivatives with respect to each individual parameter. We summarize the optimal solutions in Proposition \ref{proposition: solution to optimization for mTheta}.
\vspace{1ex}
\begin{proposition}
\label{proposition: solution to optimization for mTheta}
The optimal $(\widetilde{\pi}_k, \widetilde{\vmu}_k, \widetilde{\mSigma}_k)$ for \eref{optimization for mTheta} are
\begin{align}
\widetilde{\pi}_k =&\; \frac{n}{(\sum_{k = 1}^K v_k - K) + n} \cdot \frac{n_k}{n} \nonumber\\
&\; + \frac{\sum_{k = 1}^K v_k - K}{(\sum_{k = 1}^K v_k - K) + n} \cdot \frac{v_k - 1}{\sum_{k = 1}^K v_k - K}, \label{unsimplified weight adaptation}\\
\widetilde{\vmu}_k =&\; \frac{1}{\tau_k + n_k}\sum_{i = 1}^n \gamma_{ki} \widetilde{\vp}_i + \frac{\tau_k}{\tau_k + n_k}\vvartheta_k, \label{unsimplified mean adaptation}\\
\widetilde{\mSigma}_k =&\; \frac{n_k}{\varphi_k + d + 2 + n_k} \frac{1}{n_k} \sum_{i = 1}^n \gamma_{ki}(\widetilde{\vp}_i - \widetilde{\vmu}_k)(\widetilde{\vp}_i - \widetilde{\vmu}_k)^T \nonumber \\
&+ \frac{1}{\varphi_k + d + 2 + n_k}\left( \mPsi_k + \tau_k(\vvartheta_k - \widetilde{\vmu}_k)(\vvartheta_k - \widetilde{\vmu}_k)^T \right). \label{unsimplified covariance adaptation}
\end{align}
\end{proposition}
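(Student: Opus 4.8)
The plan is to solve the constrained problem \eref{optimization for mTheta} directly by Lagrange multipliers, exploiting the fact that Proposition~\ref{proposition:hyper-prior for the reproducing density} makes the objective fully explicit: $L(\widetilde{\mTheta}) = \sum_{k=1}^K \ell_k + \textrm{const}$, where
\begin{equation*}
\ell_k = (v_k'-1)\log\widetilde{\pi}_k - \frac{\varphi_k'+d+2}{2}\log|\widetilde{\mSigma}_k| - \frac{\tau_k'}{2}(\widetilde{\vmu}_k-\vvartheta_k')^T\widetilde{\mSigma}_k^{-1}(\widetilde{\vmu}_k-\vvartheta_k') - \frac12\mathrm{tr}(\mPsi_k'\widetilde{\mSigma}_k^{-1}),
\end{equation*}
and $v_k',\varphi_k',\tau_k',\vvartheta_k',\mPsi_k'$ are the posterior hyperparameters listed there. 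Since $L$ decouples across $k$ except for the simplex constraint tying the $\widetilde{\pi}_k$'s, I would introduce one multiplier $\lambda$ for $\sum_k\widetilde{\pi}_k=1$ and set the partial derivatives with respect to $\widetilde{\pi}_k$, $\widetilde{\vmu}_k$, and $\widetilde{\mSigma}_k$ to zero, one parameter at a time.

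For the weights, $\partial/\partial\widetilde{\pi}_k$ of $L+\lambda(\sum_j\widetilde{\pi}_j-1)$ gives $\widetilde{\pi}_k = (v_k'-1)/(-\lambda)$; summing over $k$ and imposing the constraint yields $-\lambda=\sum_j v_j'-K$, hence $\widetilde{\pi}_k=(v_k'-1)/(\sum_j v_j'-K)$. Substituting $v_k'=v_k+n_k$ and the identity $\sum_k n_k = \sum_k\sum_i\gamma_{ki}=n$ gives $\widetilde{\pi}_k=(v_k+n_k-1)/(n+\sum_j v_j-K)$, which is \eref{unsimplified weight adaptation} after splitting the numerator; concavity of $\sum_k(v_k'-1)\log\widetilde{\pi}_k$ on the simplex (for $v_k'>1$) certifies this is the maximizer. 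For the means, only the quadratic term of $\ell_k$ involves $\widetilde{\vmu}_k$, so stationarity forces $\widetilde{\vmu}_k=\vvartheta_k'$, and expanding $\vvartheta_k'=(\tau_k\vvartheta_k+n_k\bar{\vmu}_k)/(\tau_k+n_k)$ with $n_k\bar{\vmu}_k=\sum_i\gamma_{ki}\widetilde{\vp}_i$ reproduces \eref{unsimplified mean adaptation}.

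For the covariances I would substitute the optimal $\widetilde{\vmu}_k=\vvartheta_k'$ first, which kills the $\widetilde{\vmu}_k$-term of $\ell_k$ and leaves the maximization of $-\frac{\varphi_k'+d+2}{2}\log|\widetilde{\mSigma}_k|-\frac12\mathrm{tr}(\mPsi_k'\widetilde{\mSigma}_k^{-1})$. Reparametrizing by the precision $\mOmega_k=\widetilde{\mSigma}_k^{-1}$ turns this into a strictly concave problem (a positive multiple of $\log|\mOmega_k|$ minus a linear term), whose stationarity condition $\frac{\varphi_k'+d+2}{2}\mOmega_k^{-1}=\frac12\mPsi_k'$ gives $\widetilde{\mSigma}_k=\mPsi_k'/(\varphi_k'+d+2)$. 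It then remains to match this with \eref{unsimplified covariance adaptation}: the denominator is immediate from $\varphi_k'+d+2=(\varphi_k+d+2)+n_k$, while the numerator $\mPsi_k'=\mPsi_k+\mS_k+\frac{\tau_k n_k}{\tau_k+n_k}(\vvartheta_k-\bar{\vmu}_k)(\vvartheta_k-\bar{\vmu}_k)^T$ must be rewritten, using $\widetilde{\vmu}_k=\vvartheta_k'$, as $\mPsi_k+\sum_i\gamma_{ki}(\widetilde{\vp}_i-\widetilde{\vmu}_k)(\widetilde{\vp}_i-\widetilde{\vmu}_k)^T+\tau_k(\vvartheta_k-\widetilde{\vmu}_k)(\vvartheta_k-\widetilde{\vmu}_k)^T$. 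This is a parallel-axis identity: expand $\sum_i\gamma_{ki}(\widetilde{\vp}_i-\widetilde{\vmu}_k)(\cdot)^T=\mS_k+n_k(\bar{\vmu}_k-\widetilde{\vmu}_k)(\bar{\vmu}_k-\widetilde{\vmu}_k)^T$, use $\bar{\vmu}_k-\widetilde{\vmu}_k=\frac{\tau_k}{\tau_k+n_k}(\bar{\vmu}_k-\vvartheta_k)$ and $\vvartheta_k-\widetilde{\vmu}_k=\frac{n_k}{\tau_k+n_k}(\vvartheta_k-\bar{\vmu}_k)$, and add the two rank-one terms, whose coefficients sum to $\frac{n_k\tau_k^2+\tau_k n_k^2}{(\tau_k+n_k)^2}=\frac{n_k\tau_k}{\tau_k+n_k}$, reproducing the cross term in $\mPsi_k'$.

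The steps I expect to be most delicate are the matrix differentiation for $\widetilde{\mSigma}_k$ — invoking $\partial\log|\widetilde{\mSigma}_k|/\partial\widetilde{\mSigma}_k=\widetilde{\mSigma}_k^{-1}$ and $\partial\,\mathrm{tr}(\mPsi_k'\widetilde{\mSigma}_k^{-1})/\partial\widetilde{\mSigma}_k=-\widetilde{\mSigma}_k^{-1}\mPsi_k'\widetilde{\mSigma}_k^{-1}$ for symmetric matrices, or equivalently the $\mOmega_k$ reparametrization — together with the parallel-axis rewrite of $\mPsi_k'$ into the $\widetilde{\vmu}_k$-centered form; the rest is routine single-variable calculus. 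I would also note that the responsibilities $\gamma_{ki}$, hence $n_k,\bar{\vmu}_k,\mS_k$, themselves depend on $\widetilde{\mTheta}$, so \eref{unsimplified weight adaptation}--\eref{unsimplified covariance adaptation} are properly the fixed-point (M-step) equations of an EM iteration: holding $\gamma_{ki}$ at the values from the current E-step makes $L$ exactly the concave surrogate analyzed above, so each update is the unique global maximizer of that surrogate.
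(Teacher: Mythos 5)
Your proposal is correct and follows essentially the same route as the paper's Appendix proof: maximize the explicit log-posterior from Proposition~\ref{proposition:hyper-prior for the reproducing density} with a single Lagrange multiplier for the simplex constraint, obtain $\widetilde{\vmu}_k=\vvartheta_k'$ and $\widetilde{\mSigma}_k$ from stationarity, and recenter $\mPsi_k'$ via the parallel-axis identity to reach \eref{unsimplified covariance adaptation}. The only differences are cosmetic --- you substitute the optimal mean before solving for the covariance (the paper carries the $\tau_k'(\widetilde{\vmu}_k-\vvartheta_k')(\widetilde{\vmu}_k-\vvartheta_k')^T$ term along and lets it recombine), and your added remarks on concavity and on the $\gamma_{ki}$ being held fixed from the E-step are correct observations the paper leaves implicit.
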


\begin{proof}
See Appendix \ref{appendix:proof,solution to optimization for mTheta}.
\end{proof}
\vspace{1ex}

\begin{remark}
The results we showed in Proposition \ref{proposition: solution to optimization for mTheta} are different from \cite{Gauvain_Lee_1994}. In particular, the denominator for $\widetilde{\mSigma}_k$ in \cite{Gauvain_Lee_1994} is $\varphi_k - d + n_k$ whereas ours is $\varphi_k + d + 2 + n_k$. However, by using the simplification described in the next subsection, we can obtain the same result for both cases.
\end{remark}

\subsection{Simplification of $\widetilde{\mTheta}$}
The results in Proposition \ref{proposition: solution to optimization for mTheta} are general expressions for any hyper-parameters. We now discuss how to simplify the result with the help of the generic prior. First, since $\frac{v_k - 1}{\sum_{k = 1}^K v_k - K}$ is the mode of the Dirichlet distribution, a good surrogate for it is $\pi_k$. Second, $\vvartheta_k$ denotes the prior mean in the normal-inverse-Wishart distribution and thus can be appropriately approximated by $\vmu_k$. Moreover, since $\mPsi_k$ is the scale matrix on $\widetilde{\mSigma}_k$ and $\tau_k$ denotes the number of prior measurements in the normal-inverse-Wishart distribution, they can be reasonably chosen as $\mPsi_k = (\varphi_k + d + 2)\mSigma_k$ and $\tau_k = \varphi_k + d + 2$. Plugging these approximations in the results of Proposition \ref{proposition: solution to optimization for mTheta}, we summarize the simplification results as follows:

\vspace{1ex}
\begin{proposition}
\label{proposition:simplified adaptation}
Define $\rho \bydef \frac{n_k}{n}(\sum_{k = 1}^K v_k - K) = \tau_k = \varphi_k + d + 2$. Let
\begin{align*}
&\vvartheta_k = \vmu_k, \;\; \mPsi_k = (\varphi_k + d + 2)\mSigma_k, \;\; \frac{v_k - 1}{\sum_{k = 1}^K v_k - K}  = \pi_k,
\end{align*}
and $\alpha_k = \frac{n_k}{\rho + n_k}$, then \eref{unsimplified weight adaptation}-\eref{unsimplified covariance adaptation} become
\begin{align}
\widetilde{\pi}_k =& \alpha_k \frac{n_k}{n} + (1 - \alpha_k)\pi_k, \label{simplified weight adaptation}\\
\widetilde{\vmu}_k =& \alpha_k \frac{1}{n_k} \sum_{i = 1}^n \gamma_{ki} \widetilde{\vp}_i + (1 - \alpha_k)\vmu_k, \label{simplified mean adaptation}\\
\widetilde{\mSigma}_k =& \; \alpha_k \frac{1}{n_k} \sum_{i = 1}^n \gamma_{ki}(\widetilde{\vp}_i - \widetilde{\vmu}_k)(\widetilde{\vp}_i - \widetilde{\vmu}_k)^T \nonumber\\
 & + (1 - \alpha_k)\left( \mSigma_k + (\vmu_k - \widetilde{\vmu}_k)(\vmu_k - \widetilde{\vmu}_k)^T \right). \label{simplified covariance adaptation}
\end{align}
\end{proposition}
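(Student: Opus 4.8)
The plan is to prove the proposition by direct substitution of the surrogate choices into the general adaptation formulas of Proposition~\ref{proposition: solution to optimization for mTheta}, followed by elementary simplification carried out separately for the weight, the mean, and the covariance. There is no new analytic content here; the whole argument is bookkeeping of cancellations, so I would organize the computation precisely so that the coefficients $\alpha_k$ and $1-\alpha_k$ emerge transparently from the convex-combination structure already present in \eref{unsimplified weight adaptation}--\eref{unsimplified covariance adaptation}.

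For the weight, I would set $V \bydef \sum_{k=1}^K v_k - K$ and substitute $\frac{v_k-1}{V} = \pi_k$ into \eref{unsimplified weight adaptation}, so that $\widetilde{\pi}_k$ becomes a convex combination of $\frac{n_k}{n}$ and $\pi_k$ with mixing coefficients $\frac{n}{V+n}$ and $\frac{V}{V+n}$. It then remains to show $\frac{n}{V+n} = \alpha_k$. Using the definition $\rho = \frac{n_k}{n}V$, one computes $\alpha_k = \frac{n_k}{\rho+n_k} = \frac{n_k}{\frac{n_k}{n}V + n_k}$, and the factor $n_k$ cancels to give $\alpha_k = \frac{n}{V+n}$, hence $1-\alpha_k = \frac{V}{V+n}$, which is exactly \eref{simplified weight adaptation}.

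For the mean and the covariance I would substitute $\tau_k = \varphi_k + d + 2 = \rho$, $\vvartheta_k = \vmu_k$, and $\mPsi_k = (\varphi_k+d+2)\mSigma_k = \rho\mSigma_k$. The two identities that do all the work are $\frac{1}{\rho+n_k} = \frac{\alpha_k}{n_k}$ and $\frac{\rho}{\rho+n_k} = 1-\alpha_k$; applying the first to the data terms of \eref{unsimplified mean adaptation} and \eref{unsimplified covariance adaptation} produces the $\alpha_k\frac{1}{n_k}\sum_{i=1}^n\gamma_{ki}(\cdots)$ pieces, and applying the second to the prior terms produces the $(1-\alpha_k)(\cdots)$ pieces. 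For the covariance one additionally uses $\mPsi_k + \tau_k(\vvartheta_k-\widetilde{\vmu}_k)(\vvartheta_k-\widetilde{\vmu}_k)^T = \rho\big(\mSigma_k + (\vmu_k-\widetilde{\vmu}_k)(\vmu_k-\widetilde{\vmu}_k)^T\big)$, after which division by $\varphi_k+d+2+n_k = \rho+n_k$ yields \eref{simplified mean adaptation} and \eref{simplified covariance adaptation}.

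The main point that needs care -- rather than being a genuine obstacle -- is that $\rho$ as defined in the proposition carries an implicit $k$-dependence (through $n_k$, and through the simultaneous identification $\rho = \tau_k = \varphi_k+d+2$), so I would make explicit that the three occurrences of $\rho$ above are mutually consistent and that collapsing everything into the single scalar $\alpha_k = n_k/(\rho+n_k)$ is legitimate; the $n_k$-cancellation in the weight step is precisely what reconciles the weight formula with the mean/covariance formulas. As a free byproduct one also recovers the remark following Proposition~\ref{proposition: solution to optimization for mTheta}: although our denominator $\varphi_k+d+2+n_k$ differs from the $\varphi_k-d+n_k$ of \cite{Gauvain_Lee_1994}, both reduce to $\rho+n_k$ under these substitutions, so the simplified adaptation rules coincide.
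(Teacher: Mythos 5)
Your proof is correct and follows essentially the same route as the paper, which offers no separate proof of Proposition~\ref{proposition:simplified adaptation} beyond the statement that the surrogate choices are ``plugged into'' Proposition~\ref{proposition: solution to optimization for mTheta}; your substitution and the two identities $\frac{1}{\rho+n_k}=\frac{\alpha_k}{n_k}$ and $\frac{\rho}{\rho+n_k}=1-\alpha_k$ are exactly the intended bookkeeping, and the $n_k$-cancellation reconciling the weight update with the mean/covariance updates is verified correctly. Your closing remark about the $k$-dependence hidden in $\rho$ and about recovering the comparison with \cite{Gauvain_Lee_1994} is a useful clarification that the paper leaves implicit.
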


\begin{remark}
We note that Reynold \emph{et al.} \cite{Reynolds_Quatieri_Dunn_2000} presented similar simplification results (without derivations) as ours. However, their results are valid only for the scalar case or when the covariance matrices are diagonal. In contrast, our results support full covariance matrices and thus are more general. As will be seen, for our denoising application, since the image pixels (especially adjacent pixels) are correlated, the full covariance matrices are necessary for good performance.
\end{remark}

\begin{remark}
Comparing \eref{simplified covariance adaptation} with the work of Lu \emph{et al.} \cite{Lu_Lin_Jin_Yang_Wang_2015}, we note that in \cite{Lu_Lin_Jin_Yang_Wang_2015} the covariance is
\begin{equation}
\widetilde{\mSigma}_k = \alpha_k \frac{1}{n_k} \sum_{i = 1}^n \gamma_{ki}\widetilde{\vp}_i\widetilde{\vp}_i^T + (1 - \alpha_k)\mSigma_k.
\end{equation}
This result, although it looks similar to ours, is generally not valid if we follow the Bayesian hyper-prior approach, unless $\vmu_k$ and $\widetilde{\vmu}_k$ are both 0.
\end{remark}

\subsection{EM Adaptation Algorithm}
\begin{algorithm}[t]
\caption{EM adaptation Algorithm}
\label{alg:GMM adaptation algorithm}
\begin{algorithmic}
\STATE Input: $\mTheta = \{(\pi_k, \vmu_k, \mSigma_k)\}_{k=1}^K$, $\{\widetilde{\vp}_1,\ldots,\widetilde{\vp}_n\}$.
\STATE Output: Adapted parameters $\widetilde{\mTheta} = \{(\widetilde{\pi}_k, \widetilde{\vmu}_k, \widetilde{\mSigma}_k)\}_{k=1}^K$.
\STATE $\textbf{E-step}:$ Compute, for $k=1,\ldots,K$ and $i=1,\ldots,n$
\begin{align}
\gamma_{ki} &= \frac{\pi_k \calN(\widetilde{\vp}_i|\vmu_k, \mSigma_k)}{\sum\limits_{l=1}^K \pi_l \calN(\widetilde{\vp}_i|\vmu_l, \mSigma_l)}, \quad n_k =\sum\limits_{i=1}^n \gamma_{ki}. \label{posterior probabilities}
\end{align}
$\textbf{M-step}:$ Compute, for $k=1,\ldots,K$
\begin{align}
\widetilde{\pi}_k &= \alpha_k \frac{n_k}{n} + (1-\alpha_k)\pi_k, \label{weight adaptation}\\
\widetilde{\vmu}_k &= \alpha_k\frac{1}{n_k}\sum_{i=1}^n\gamma_{ki}\widetilde{\vp}_i + (1 - \alpha_k) \vmu_k, \label{mean adaptation}\\
\widetilde{\mSigma}_k &= \;\alpha_k\frac{1}{n_k} \sum_{i=1}^n\gamma_{ki}(\widetilde{\vp}_i - \widetilde{\vmu}_k)(\widetilde{\vp}_i - \widetilde{\vmu}_k)^T \nonumber\\
&\quad + (1 - \alpha_k) \left( \mSigma_k + (\vmu_k - \widetilde{\vmu}_k)(\vmu_k - \widetilde{\vmu}_k)^T \right).
\label{covariance adaptation}
\end{align}
\STATE Postprocessing: Normalize $\{\widetilde{\pi}_k\}_{k=1}^K$ so that they sum to 1,
\STATE \quad\quad and ensure $\{\widetilde{\mSigma}_k\}_{k=1}^K$ is positive semi-definite.
\end{algorithmic}
\end{algorithm}

The proposed EM adaptation algorithm is summarized in Algorithm~\ref{alg:GMM adaptation algorithm}. EM adaptation shares many similarities with the standard EM algorithm. To better understand the differences, we take a closer look at each step.

\vspace{2ex}
\textbf{E-Step}: The E-step in the EM adaptation is the same as in the EM algorithm. We compute the likelihood of $\widetilde{\vp}_i$ conditioned on the generic parameter $(\pi_k,\vmu_k,\mSigma_k)$ as
\begin{equation}
\gamma_{ki} = \frac{\pi_k \calN(\widetilde{\vp}_i \,|\,\vmu_k, \mSigma_k)}{\sum_{l=1}^K \pi_l \calN(\widetilde{\vp}_i \,|\,\vmu_l, \mSigma_l)}.
\label{eq:gamma}
\end{equation}

\vspace{2ex}
\textbf{M-Step}:
The M-step is a more interesting step. From \eref{weight adaptation} to \eref{covariance adaptation}, $(\widetilde{\pi}_k, \widetilde{\vmu}_k, \widetilde{\mSigma}_k)$ are updated through a linear combination of the contributions from the new data and the generic parameters. On one extreme, when $\alpha_k = 1$, the M-step turns exactly back to the M-step in the EM algorithm. On the other extreme, when $\alpha_k = 0$, all emphasis is put on the generic parameters. For $\alpha_k$ that lies in between, the updates are a weighted averaging of the new data and the generic parameters. Taking the mean as an example, the EM adaptation updates the mean according to
\begin{equation}
\widetilde{\vmu}_k = \underbrace{\alpha_k \left(\frac{1}{n_k}\sum_{i=1}^n\gamma_{ki}\widetilde{\vp}_i\right)}_\textbf{new data} + \underbrace{\vphantom{\sum_{i=1}^n} (1 - \alpha_k) \vmu_k}_\textbf{generic prior}.
\label{eq:udpate mean}
\end{equation}
The updated mean in \eref{eq:udpate mean} is a linear combination of two terms, where the first term is an empirical data average with the fractional weight $\gamma_{ki}$ from each data point $\widetilde{\vp}_i$ and the second term is the generic mean $\vmu_k$. Similarly for the covariance update in \eref{covariance adaptation}, the first term computes an empirical covariance with each data point weighted by $\gamma_{ki}$ which is the same as in the M-step of the EM algorithm, and the second term includes the generic covariance along with an adjustment term $(\vmu_k - \widetilde{\vmu}_k)(\vmu_k - \widetilde{\vmu}_k)^T$. These two terms are then linearly combined to yield the updated covariance.


\subsection{Convergence}
The EM adaptation shown in Algorithm \ref{alg:GMM adaptation algorithm} is an EM algorithm. Therefore, its convergence is guaranteed by the classical theory, which we state without proof as follows:

\begin{proposition}
\label{proposition: convergence proof}
Let $L(\widetilde{\mTheta}) = \log{f(\widetilde{\vp}_1,\ldots,\widetilde{\vp}_n | \widetilde{\mTheta})}$ be the log-likelihood, $f(\widetilde{\mTheta})$ be the prior, and $Q\big(\widetilde{\mTheta} | \widetilde{\mTheta}^{(m)} \big)$ be the Q function in the $m$-th iteration. If
\begin{equation*}
Q\big(\widetilde{\mTheta} | \widetilde{\mTheta}^{(m)}\big) + \log{f(\widetilde{\mTheta})} \geq Q\big(\widetilde{\mTheta}^{(m)} | \widetilde{\mTheta}^{(m)}\big) + \log{f\big(\widetilde{\mTheta}^{(m)}\big)},
\end{equation*}
then
\begin{equation*}
L(\widetilde{\mTheta}) + \log{f(\widetilde{\mTheta})} \geq L\big(\widetilde{\mTheta}^{(m)}\big) + \log{f\big(\widetilde{\mTheta}^{(m)}\big)}.
\end{equation*}
\end{proposition}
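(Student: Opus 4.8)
The plan is to reduce the statement to the classical monotonicity (ascent) lemma for EM, applied to the MAP objective $L(\widetilde{\mTheta}) + \log f(\widetilde{\mTheta})$ rather than to $L(\widetilde{\mTheta})$ alone. The hyper-prior term $\log f(\widetilde{\mTheta})$ plays no active role: it is inert with respect to the latent variables of the model, so it survives the argument unchanged and the proof is structurally identical to the prior-free case.

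To set up, I would introduce latent component labels $z_1,\ldots,z_n$ with $z_i \in \{1,\ldots,K\}$, where $z_i$ records which Gaussian generated $\widetilde{\vp}_i$, so that the complete-data density is $f(\widetilde{\vp}_i, z_i \,|\, \widetilde{\mTheta}) = \widetilde{\pi}_{z_i}\,\calN(\widetilde{\vp}_i \,|\, \widetilde{\vmu}_{z_i}, \widetilde{\mSigma}_{z_i})$, and the responsibilities $\gamma_{ki}$ computed in the E-step of Algorithm~\ref{alg:GMM adaptation algorithm} are exactly the posteriors $f(z_i = k \,|\, \widetilde{\vp}_i, \widetilde{\mTheta}^{(m)})$. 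Writing $\vz = (z_1,\ldots,z_n)$, $\widetilde{\vp}$ for the tuple $(\widetilde{\vp}_1,\ldots,\widetilde{\vp}_n)$, and $Q(\widetilde{\mTheta} \,|\, \widetilde{\mTheta}^{(m)}) = \E_{\vz}\big[\log f(\widetilde{\vp}, \vz \,|\, \widetilde{\mTheta})\big]$ with the expectation over $\vz \sim f(\cdot \,|\, \widetilde{\vp}, \widetilde{\mTheta}^{(m)})$, I would define $H(\widetilde{\mTheta} \,|\, \widetilde{\mTheta}^{(m)}) \bydef \E_{\vz}\big[\log f(\vz \,|\, \widetilde{\vp}, \widetilde{\mTheta})\big]$ and record the standard decomposition
\[
L(\widetilde{\mTheta}) \;=\; Q\big(\widetilde{\mTheta} \,\big|\, \widetilde{\mTheta}^{(m)}\big) - H\big(\widetilde{\mTheta} \,\big|\, \widetilde{\mTheta}^{(m)}\big).
\]
This identity is obtained from the pointwise relation $\log f(\widetilde{\vp} \,|\, \widetilde{\mTheta}) = \log f(\widetilde{\vp}, \vz \,|\, \widetilde{\mTheta}) - \log f(\vz \,|\, \widetilde{\vp}, \widetilde{\mTheta})$, valid for every value of $\vz$, by taking expectations of both sides over $\vz \sim f(\cdot \,|\, \widetilde{\vp}, \widetilde{\mTheta}^{(m)})$ (the left-hand side being constant in $\vz$).

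The one genuinely substantive ingredient is the inequality $H(\widetilde{\mTheta} \,|\, \widetilde{\mTheta}^{(m)}) \le H(\widetilde{\mTheta}^{(m)} \,|\, \widetilde{\mTheta}^{(m)})$ for every $\widetilde{\mTheta}$, i.e.\ that $\widetilde{\mTheta} \mapsto H(\widetilde{\mTheta} \,|\, \widetilde{\mTheta}^{(m)})$ is maximized at $\widetilde{\mTheta}^{(m)}$; this is Gibbs' inequality, since $H(\widetilde{\mTheta}^{(m)} \,|\, \widetilde{\mTheta}^{(m)}) - H(\widetilde{\mTheta} \,|\, \widetilde{\mTheta}^{(m)}) = \E_{\vz}\big[ \log \tfrac{f(\vz \,|\, \widetilde{\vp}, \widetilde{\mTheta}^{(m)})}{f(\vz \,|\, \widetilde{\vp}, \widetilde{\mTheta})} \big]$ is a Kullback--Leibler divergence, hence nonnegative (Jensen's inequality applied to the convex function $-\log$). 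Granting this, the conclusion follows from the chain
\begin{align*}
L(\widetilde{\mTheta}) + \log f(\widetilde{\mTheta}) &= Q\big(\widetilde{\mTheta} \,\big|\, \widetilde{\mTheta}^{(m)}\big) + \log f(\widetilde{\mTheta}) - H\big(\widetilde{\mTheta} \,\big|\, \widetilde{\mTheta}^{(m)}\big) \\
&\ge Q\big(\widetilde{\mTheta}^{(m)} \,\big|\, \widetilde{\mTheta}^{(m)}\big) + \log f\big(\widetilde{\mTheta}^{(m)}\big) - H\big(\widetilde{\mTheta} \,\big|\, \widetilde{\mTheta}^{(m)}\big) \\
&\ge Q\big(\widetilde{\mTheta}^{(m)} \,\big|\, \widetilde{\mTheta}^{(m)}\big) + \log f\big(\widetilde{\mTheta}^{(m)}\big) - H\big(\widetilde{\mTheta}^{(m)} \,\big|\, \widetilde{\mTheta}^{(m)}\big) \\
&= L\big(\widetilde{\mTheta}^{(m)}\big) + \log f\big(\widetilde{\mTheta}^{(m)}\big),
\end{align*}
where the first inequality is exactly the hypothesis of the proposition and the second is Gibbs' inequality carried through the minus sign in front of $H$.

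I do not expect a real obstacle here: this is the textbook EM ascent lemma in its MAP form. The only points requiring care are bookkeeping --- stating the decomposition and Gibbs' inequality for the \emph{joint} latent vector $\vz$ (equivalently, exploiting the product structure over $i$), and keeping \emph{both} occurrences of $H$ evaluated against the previous iterate $\widetilde{\mTheta}^{(m)}$ --- together with the observation that $\log f(\widetilde{\mTheta})$ commutes through the whole argument because it does not depend on $\vz$, which is exactly why upgrading the classical likelihood-ascent statement to the MAP statement of the proposition costs nothing.
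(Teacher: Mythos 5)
Your proof is correct and is precisely the classical EM ascent argument (the decomposition $L = Q - H$ plus Gibbs' inequality, with the prior term carried through unchanged) that the paper itself does not reproduce but instead defers to the cited tutorial of Gupta and Chen. Nothing is missing: the hypothesis supplies the first inequality in your chain, Gibbs' inequality supplies the second, and both occurrences of $H$ are correctly conditioned on $\widetilde{\mTheta}^{(m)}$.
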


\begin{proof}
See \cite{Gupta_Chen_2011}.
\end{proof}

The convergence result states that the EM adaptation converges to a local minimum defined by the Q function. Experimentally, we observe that EM adaptation enters the steady state in a few iterations. To demonstrate this observation, we conduct experiments on different testing images. Figure \ref{fig:iterative adaptation effect} shows the result of one testing image. For all noise levels ($\sigma = 20$ to $100$), PSNR increases as more iterations are applied and converges after about four iterations. We also observe that for most testing images, the improvement becomes marginal after a single iteration.


\begin{figure}[h]
\def\iw{1\linewidth}
\includegraphics[width=\iw]{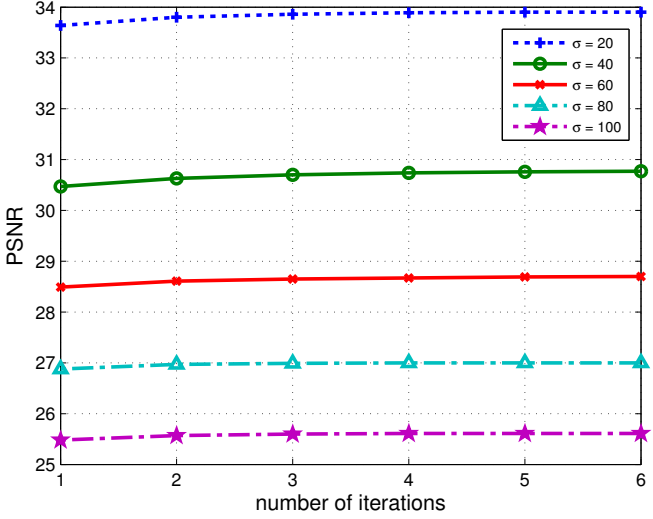}
\caption{Image denoising using EM adaptation: The PSNR only improves marginally after the first iteration, confirming that the EM adaptation can typically be performed in a single iteration. The testing image is House of size 256 $\times$ 256. $\sigma = 20, \ldots, 100$ indicates the noise level.}
\label{fig:iterative adaptation effect}
\end{figure}

\section{EM Adaptation for Denoising}
The proposed Algorithm \ref{alg:GMM adaptation algorithm} presented in the previous section works well when the training patches $\{\widetilde{\vp}_1, \ldots, \widetilde{\vp}_n\}$ are drawn from the \emph{clean} ground-truth image $\vx$. In this section, we discuss how to modify the EM adaptation algorithm when the data are drawn from noisy images. We will also discuss methods to improve computational complexity.

\subsection{Adaptation to Noisy Image}
In the presence of noise, the Gaussian mixture model is perturbed by the noise covariance. More precisely, if we assume that a clean patch $\widetilde{\vp}_i$ is drawn from a Gaussian mixture and if the noise $\vepsilon$ is i.i.d. Gaussian with distribution $\calN(0,\widetilde{\sigma}^2\mI)$, then the probability of drawing a noisy version $\widetilde{\vp}_i + \vepsilon$ is the convolution of the GMM and the Gaussian noise
\begin{equation*}
f(\widetilde{\vp}_i + \vepsilon) = \sum_{k=1}^K \pi_k \int_{-\infty}^{\infty}\calN(\widetilde{\vp}_i-\vq \,|\, \vmu_k, \mSigma_k) \calN(\vq \,|\, 0, \widetilde{\sigma}^2\mI) d\vq.
\end{equation*}
Since convolution is linear, it does not influence the weight $\pi_k$ and the mean $\vmu_k$. The covariance is perturbed by adding $\widetilde{\sigma}^2\mI$ to $\mSigma_k$. Therefore, the final distribution of $\widetilde{\vp}_i + \vepsilon$ is
\begin{equation*}
f(\widetilde{\vp}_i + \vepsilon) = \sum_{k=1}^K \pi_k \calN(\widetilde{\vp}_i \,|\, \vmu_k, \mSigma_k + \widetilde{\sigma}^2\mI).
\end{equation*}

The implication of the above analysis is that in the presence of noise, we should compensate the noise present in the observed data. As a result, the EM algorithm can be accordingly modified by changing \eref{posterior probabilities} to
\begin{equation}
\gamma_{ki} = \frac{\pi_k \calN(\widetilde{\vp}_i \,|\, \vmu_k, \mSigma_k + \widetilde{\sigma}^2\mI)}{\sum_{l=1}^K \pi_l \calN(\widetilde{\vp}_l \,|\, \vmu_l, \mSigma_l + \widetilde{\sigma}^2\mI)},
\label{revised posterior probabilities}
\end{equation}
and \eref{covariance adaptation} to
\begin{align}
\widetilde{\mSigma}_k = & \;\alpha_k\frac{1}{n_k} \sum_{i=1}^n\gamma_{ki}\left( (\widetilde{\vp}_i - \widetilde{\vmu}_k)(\widetilde{\vp}_i - \widetilde{\vmu}_k)^T - \widetilde{\sigma}^2\mI \right) \nonumber\\
&\quad + (1 - \alpha_k) \left( \mSigma_k + (\vmu_k - \widetilde{\vmu}_k)(\vmu_k - \widetilde{\vmu}_k)^T \right).
\label{revised covariance adaptation}
\end{align}
In other words, we add $\widetilde{\sigma}^2\mI$ to $\gamma_{ki}$ to ensure the correct likelihood of drawing a noisy observation, and subtract $\widetilde{\sigma}^2\mI$ from $\widetilde{\mSigma}_k$ to ensure the correct mixture model of the clean sample.

\begin{remark}
Our noise handling method shares some similarity with the work of Awate and Whitaker \cite{Awate_Whitaker_2007} by subtracting the noise variance from the observed variance. However, in \cite{Awate_Whitaker_2007} the variance is assumed identical for all mixture components and is a scalar. In our case, we have to estimate a collection of covariance matrices $\{\widetilde{\mSigma}_k\}_{k=1}^K$, which are dense. Another difference is that we use an adaptation approach by combining the generic prior and the new data. Awate and Whitaker \cite{Awate_Whitaker_2007} do not have generic priors, but they minimize the Kullback-Leibler divergence of the ideal distribution and the estimated distribution.
\end{remark}

\subsection{Adaptation by Pre-filtering}
A limitation of the above noise handling technique is that when $\widetilde{\sigma}$ is large, it becomes impossible for \eref{revised covariance adaptation} to return the true covariance as we only have limited number of samples for training. Therefore, for practical reasons we consider a pre-filtering step. This idea is similar to many image denoising algorithms such as BM3D \cite{Dabov_Foi_Katkovnik_2007} where a first stage pilot denoising is applied before going to the second stage of the actual denoising. In our problem, we apply an existing denoising algorithm to obtain a pre-filtered image. The adaptation is then applied to the pre-filtered image to generate an adapted prior. In the second stage, we apply the MAP denoising as described in Section II-A to obtain the final denoised image.

When a pre-filtering step is employed, we must address the question of what is the influence of a particular pre-filtering method. Clearly, a strong pre-filtering will likely lead to better final performance as the pre-filtered image provides more relevant information for the adaptation step. However, as we will see in the experiment section (Section~\ref{section5}), using a reasonably advanced denoising algorithm for pre-filtering does not cause much performance difference. The intuition is that pre-filtering is nothing but a black-box method to reduce the noise level $\widetilde{\sigma}$. Since $\widetilde{\sigma}$ can never be zero for any pre-filtering method, we will have to use \eref{revised posterior probabilities} and \eref{revised covariance adaptation} by replacing $\widetilde{\sigma}$ with the amount of noise remaining in the pre-filtered image. For most state-of-the-art denoising algorithms, the difference in $\widetilde{\sigma}$ is quite small. The real challenge is how to estimate $\widetilde{\sigma}$.

\subsection{Estimating $\widetilde{\sigma}^2$}
In order to analyze the noise remaining in the pre-filtered image, we let $\vxbar$ be the pre-filtered image. The distribution of the residue $\vxbar-\vx$ is unknown, but empirically we observe that it can be approximated by a single Gaussian as in \cite{Papyan_Elad_2016}. This approximation allows us to model $(\vxbar - \vx) \sim \calN(\vzero, \widetilde{\sigma}^2\mI)$, where $\widetilde{\sigma}^2 \bydef \E\|\vxbar - \vx\|^2$ is the variance of $\vxbar$. In other words, $\widetilde{\sigma}^2$ is the mean squared error (MSE) of $\vxbar$ compared to $\vx$. Therefore, if $\vx$ is available, estimating $\widetilde{\sigma}^2$ is trivial as it is just the MSE. However, in the absence of $\vx$, we need a surrogate to estimate the MSE and hence $\widetilde{\sigma}^2$.

The surrogate strategy we use is the Stein's Unbiased Risk Estimator (SURE) \cite{Stein_1981}. SURE provides a way for unbiased estimation of the true MSE. The analytical expression of SURE is
\begin{equation}
\widetilde{\sigma}^2 \approx \mbox{SURE} \bydef \frac{1}{n} \|\vy -\vxbar\|^2 - \sigma^2 + \frac{2\sigma^2}{n} \mbox{div},
\end{equation}
where div denotes the divergence of the denoising algorithm with respect to the noisy measurements. However, not all denoising algorithms have a closed form for the divergence term. To alleviate this issue, we adopt the Monte-Carlo SURE \cite{Ramani_Blu_Unser_2008} to approximate the divergence. We refer readers to \cite{Ramani_Blu_Unser_2008} for detailed discussions about Monte-Carlo SURE. The key steps are summarized in Algorithm \ref{alg:Monte-Carlo SURE algorithm}.

\begin{algorithm}[!]
\caption{Monte-Carlo SURE for Estimating $\widetilde{\sigma}^2$}
\label{alg:Monte-Carlo SURE algorithm}
\begin{algorithmic}
\STATE Input: $\vy \in \R^n$, $\sigma^2$, $\delta$ (typically $\delta = 0.01$).
\STATE Output: $\widetilde{\sigma}^2$.
\STATE Generate $\vb \sim \calN(\vzero, \mI) \in \R^n$.
\STATE Construct $\vy' = \vy + \delta \vb$.
\STATE Apply a denoising algorithm on $\vy$ and $\vy'$  to get two pre-filtered images $\vxbar$ and $\vxbar'$, respectively.
\STATE Compute $\mbox{div} = \frac{1}{\delta} \vb^T(\vxbar' - \vxbar)$.
\STATE Compute $ \widetilde{\sigma}^2 = \mbox{SURE} \bydef \frac{1}{n} \|\vy -\vxbar\|^2 - \sigma^2 + \frac{2\sigma^2}{n} \mbox{div}$.
\end{algorithmic}
\end{algorithm}

\begin{figure}[h]
\def\iw{1\linewidth}
\includegraphics[width=\iw]{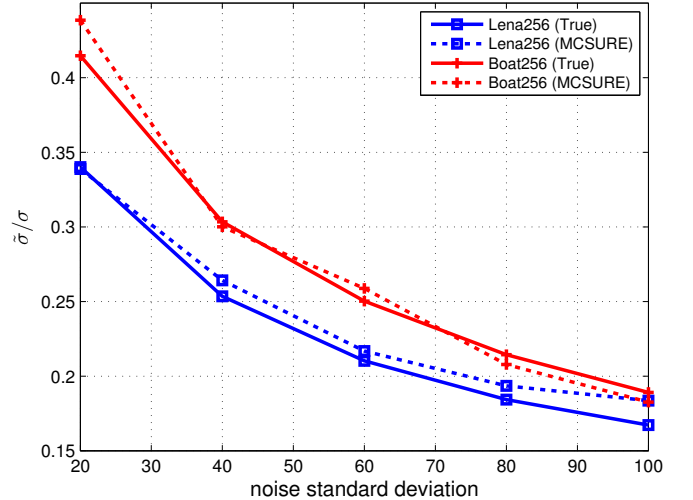}
\caption{Comparison between the true MSE and Monte-Carlo SURE when estimating $\widetilde{\sigma}/\sigma$ over a large range of noise levels. The pre-filtering method is EPLL. }
\label{fig:monte-carlo SURE vs true MSE}
\end{figure}

To demonstrate the effectiveness of Monte-Carlo SURE, we compare the estimates for $\widetilde{\sigma} / \sigma$ when we use the true MSE and Monte-Carlo SURE. As is observed in Figure \ref{fig:monte-carlo SURE vs true MSE}, over a large range of noise levels, the Monte-Carlo SURE curves are quite similar to the true MSE curves. The pre-filtering method in Figure \ref{fig:monte-carlo SURE vs true MSE} is EPLL. For other methods, such as BM3D, we have similar observations for different noise levels.

\subsection{Estimating $\alpha_k$}
Besides the pre-filtering for noisy images, we should also determine the combination weight $\alpha_k$ for the EM adaptation. From the derivation of the algorithm, the combination weight $\alpha_k = \frac{n_k}{n_k + \rho}$ is determined by both the probabilistic count $n_k$ and the relevance factor $\rho$. The factor $\rho$ is adjusted to allow different adaptation rates. When there is a good match between training data and target noisy image, we can let the adaptation rely more on the training data; When there is a poor match between training data and target noisy image, we use more of the noisy image. This parameter is user defined, similar to a regularization parameter in optimization. In the application of speaker verification \cite{Reynolds_Quatieri_Dunn_2000,Woodland_2001}, $\rho$ is set to 16. Experimentally, we find that the performance is insensitive to $\rho$ in the range of 8 and 20.

In this paper, $\rho$ is tuned to maximize the PSNR of the denoised image. In Figure \ref{fig:rho effect on an denoised image}, we show how PSNR changes in terms of $\rho$. The PSNR curves indicate that for a testing image of $64 \times 64$, a large $\rho$ for EM adaptation is better. As the testing images become larger, we observe that the optimal $\rho$ becomes smaller. Empirically, we find that $\rho$ in the range of 1 and 10 works well for a variety of images (over $200 \times 200$) for different noise levels.

\begin{remark}
For a fixed $\rho$, it is perhaps tempting to compare the estimated GMM with the ground truth GMM because a good match will likely provide better denoising performance. However, we note that the ground truth GMM is never available even if we use the oracle clean image because a finite-size image does not have adequate patches for estimating the GMM parameters.
\end{remark}

\begin{figure}[h]
\def\iw{1\linewidth}
\includegraphics[width=\iw]{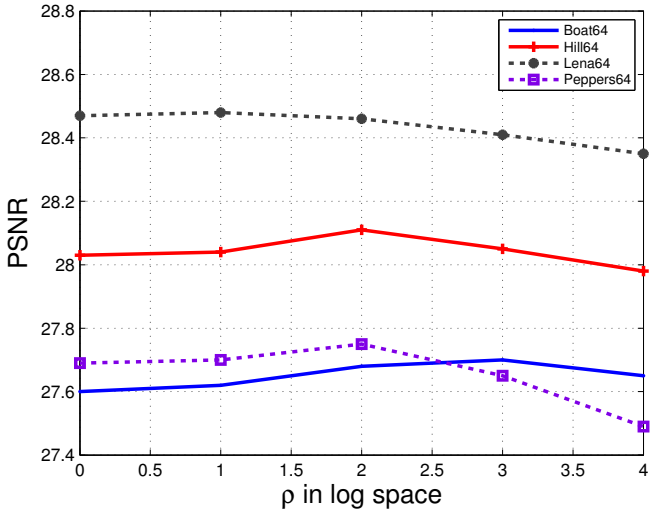}
\caption{The effect of $\rho$ on denoising performance. The pre-filtered image is used for the EM adaptation algorithm. The testing images are of size $64 \times 64$ with noise $\sigma = 20$.}
\label{fig:rho effect on an denoised image}
\end{figure}

\subsection{Computational Improvement}
Finally, we comment on a simple but very effective way of improving the computational speed. If we take a closer look at the M-step in Algorithm \ref{alg:GMM adaptation algorithm}, we observe that $\widetilde{\pi}_k$ and $\widetilde{\vmu}_k$ are easy to compute. However, $\widetilde{\mSigma}_k$ is time-consuming to compute, because updating each of the $K$ covariance matrices requires $n$ time-consuming outer product operations $\sum_{i=1}^n \gamma_{ki}(\widetilde{\vp}_i - \widetilde{\vmu}_k)(\widetilde{\vp}_i - \widetilde{\vmu}_k)^T$. Most previous works mitigate the problem by assuming that the covariance is diagonal \cite{Reynolds_Quatieri_Dunn_2000,
Woodland_2001,Dixit_Rasiwasia_Vasconcelos_2011}. However, this assumption is not valid in our case because image pixels (especially neighboring pixels) are correlated.

Our solution to this problem is shown in Proposition \ref{proposition:simplified covariance adaptation}. The new result is an \emph{exact} computation of \eref{covariance adaptation} but with significantly less operations. The idea is to exploit the algebraic structure of the covariance matrix.

\begin{proposition}
\label{proposition:simplified covariance adaptation}
The full covariance adaptation in \eref{covariance adaptation} can be simplified as
\begin{align}
\widetilde{\mSigma}_k = & \;\alpha_k \frac{1}{n_k}\sum_{i=1}^n \gamma_{ki}\widetilde{\vp}_i\widetilde{\vp}_i^T - \widetilde{\vmu}_k\widetilde{\vmu}_k^T \nonumber\\
& \; +(1-\alpha_k)(\mSigma_k + \vmu_k\vmu_k^T).
\label{eq:simplified covariance adaptation}
\end{align}
\end{proposition}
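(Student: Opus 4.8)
The plan is to establish \eref{eq:simplified covariance adaptation} by a direct expansion of the two matrix outer products in \eref{covariance adaptation}, the only inputs being the M-step mean update \eref{mean adaptation} and the identity $\sum_{i=1}^n \gamma_{ki} = n_k$ from \eref{posterior probabilities}. No probabilistic argument is needed: this is a purely algebraic rearrangement that happens to expose cheaper arithmetic.

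First I would abbreviate the responsibility-weighted sample mean as $\vm_k \bydef \frac{1}{n_k}\sum_{i=1}^n \gamma_{ki}\widetilde{\vp}_i$, so that \eref{mean adaptation} reads $\widetilde{\vmu}_k = \alpha_k \vm_k + (1-\alpha_k)\vmu_k$. Expanding $(\widetilde{\vp}_i - \widetilde{\vmu}_k)(\widetilde{\vp}_i - \widetilde{\vmu}_k)^T$ inside the first sum of \eref{covariance adaptation} and using $\frac{1}{n_k}\sum_{i=1}^n \gamma_{ki} = 1$ to pull the constant matrices out of the normalized sum, the first term becomes $\alpha_k\frac{1}{n_k}\sum_{i=1}^n \gamma_{ki}\widetilde{\vp}_i\widetilde{\vp}_i^T - \alpha_k\vm_k\widetilde{\vmu}_k^T - \alpha_k\widetilde{\vmu}_k\vm_k^T + \alpha_k\widetilde{\vmu}_k\widetilde{\vmu}_k^T$. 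Expanding $(\vmu_k - \widetilde{\vmu}_k)(\vmu_k - \widetilde{\vmu}_k)^T$ in the second term of \eref{covariance adaptation} turns it into $(1-\alpha_k)\mSigma_k + (1-\alpha_k)\vmu_k\vmu_k^T - (1-\alpha_k)\vmu_k\widetilde{\vmu}_k^T - (1-\alpha_k)\widetilde{\vmu}_k\vmu_k^T + (1-\alpha_k)\widetilde{\vmu}_k\widetilde{\vmu}_k^T$.

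Next I would add the two expansions and group the terms carrying a factor $\widetilde{\vmu}_k$. The terms of the form $(\,\cdot\,)\widetilde{\vmu}_k^T$ sum to $-(\alpha_k\vm_k + (1-\alpha_k)\vmu_k)\widetilde{\vmu}_k^T = -\widetilde{\vmu}_k\widetilde{\vmu}_k^T$ by \eref{mean adaptation}; symmetrically the terms of the form $\widetilde{\vmu}_k(\,\cdot\,)^T$ sum to $-\widetilde{\vmu}_k\widetilde{\vmu}_k^T$; and the coefficient of $\widetilde{\vmu}_k\widetilde{\vmu}_k^T$ is $\alpha_k + (1-\alpha_k) = 1$. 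Hence all contributions involving $\widetilde{\vmu}_k$ collapse to $-\widetilde{\vmu}_k\widetilde{\vmu}_k^T - \widetilde{\vmu}_k\widetilde{\vmu}_k^T + \widetilde{\vmu}_k\widetilde{\vmu}_k^T = -\widetilde{\vmu}_k\widetilde{\vmu}_k^T$, and the surviving terms $\alpha_k\frac{1}{n_k}\sum_{i=1}^n \gamma_{ki}\widetilde{\vp}_i\widetilde{\vp}_i^T$, $(1-\alpha_k)\mSigma_k$ and $(1-\alpha_k)\vmu_k\vmu_k^T$ reassemble exactly into the right-hand side of \eref{eq:simplified covariance adaptation}.

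There is no genuine obstacle beyond careful bookkeeping of the cross terms; the only point worth stressing is that the cancellation is driven entirely by the fact that the two weights $\alpha_k$ and $1-\alpha_k$ in \eref{mean adaptation} sum to one, so the identity is exact for every admissible $\alpha_k$ with no extra hypothesis. I would finish by noting the computational payoff that motivates the proposition: in \eref{eq:simplified covariance adaptation} the rank-one matrices $\widetilde{\vp}_i\widetilde{\vp}_i^T$ depend neither on the mixture index $k$ nor on $\widetilde{\vmu}_k$, so $\sum_{i=1}^n \gamma_{ki}\widetilde{\vp}_i$ and $\sum_{i=1}^n \gamma_{ki}\widetilde{\vp}_i\widetilde{\vp}_i^T$ can be accumulated in a single pass over the patches and then reused to form $\widetilde{\vmu}_k$ and $\widetilde{\mSigma}_k$ for all $k$, whereas the centered outer products $(\widetilde{\vp}_i - \widetilde{\vmu}_k)(\widetilde{\vp}_i - \widetilde{\vmu}_k)^T$ in \eref{covariance adaptation} cannot be formed until $\widetilde{\vmu}_k$ is known and therefore require a second pass for each component.
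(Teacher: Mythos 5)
Your proposal is correct and follows essentially the same route as the paper's own proof: expand both outer products in \eref{covariance adaptation}, substitute the mean update \eref{mean adaptation} to collapse the cross terms, and observe that the coefficients $\alpha_k$ and $1-\alpha_k$ sum to one so that all contributions involving $\widetilde{\vmu}_k$ reduce to $-\widetilde{\vmu}_k\widetilde{\vmu}_k^T$. The only cosmetic difference is that you name the weighted sample mean $\vm_k$ and group the $(\,\cdot\,)\widetilde{\vmu}_k^T$ terms before cancelling, whereas the paper substitutes $\alpha_k\frac{1}{n_k}\sum_i\gamma_{ki}\widetilde{\vp}_i = \widetilde{\vmu}_k-(1-\alpha_k)\vmu_k$ directly into the expansion; the bookkeeping is otherwise identical.
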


\begin{proof}
See Appendix \ref{appendix:proof,simplified covariance adaptation}.
\end{proof}

The simplification is very rewarding because computing $\alpha_k \frac{1}{n_k}\sum_{i=1}^n \gamma_{ki}\widetilde{\vp}_i\widetilde{\vp}_i^T$ does not involve $\widetilde{\vmu}_k$ and thus can be pre-computed for each component, which makes the computation of $\widetilde{\mSigma}_k$ much more efficient. This reduces the computational complexity from $O(nd^2)$ for \eref{covariance adaptation} down to $O(d^2)$ for \eref{eq:simplified covariance adaptation}. In Table \ref{table:runtime comparison for covariance simplification}, we list the averaging runtime when computing \eref{covariance adaptation} and \eref{eq:simplified covariance adaptation} for two image sizes.

\begin{table}[h]
\centering
\begin{tabular}{cc|c|c|c}
\hline
& \small image size & Eq. \eref{covariance adaptation} & Eq. \eref{eq:simplified covariance adaptation} & Speedup\\
&                   & (original)                       & (ours)                                             & factor\\
\hline
runtime (sec) & ($64 \times 64$) & 31.34 & 0.30 & 104.5\\
runtime (sec) & ($128 \times 128$) & 136.58 & 1.32 & 103.2\\
\hline
\end{tabular}
\caption{Runtime comparison between \eref{covariance adaptation} and \eref{eq:simplified covariance adaptation} for different image sizes.}
\label{table:runtime comparison for covariance simplification}
\end{table}

\section{Experimental Results}
\label{section5}
In this section, we present experimental results for \emph{single}- and \emph{example}-based image denoising. \emph{Single} refers to using the single noisy image for training, whereas \emph{example} refers to using an external reference image for training.

\subsection{Experiment Settings}
For comparison, we consider two state-of-the-art methods: BM3D \cite{Dabov_Foi_Katkovnik_2007} and EPLL \cite{Zoran_Weiss_2011}. For both methods, we run the original codes provided by the authors with the default parameters. The GMM prior in EPLL is learned from 2,000,000 randomly chosen $8 \times 8$ patches. The number of mixture components is fixed at 200 to match the GMM prior provided by EPLL. Alternatively, we can optimize the number of mixtures by performing a cross validation against the noise distribution as in \cite{Chan_Zickler_Lu_2016} or performing a distribution match against the generic prior as in \cite{Lu_Lin_Jin_Yang_Wang_2015}.

We consider three versions of EM adaptation: (1) an oracle adaptation by adapting the generic prior to the ground-truth image, denoted as \emph{aGMM-clean}; (2) a pre-filtered adaptation by adapting the generic prior to the EPLL result, denoted as \emph{aGMM-EPLL}; (3) a pre-filtered adaptation by adapting the generic prior to the BM3D result, denoted as \emph{aGMM-BM3D}. In the example-based image denoising, we adapt the generic prior to an example image and denote it as \emph{aGMM-example}. We set the parameter $\rho = 1$, and experimental results show that the performance is insensitive to $\rho$ being in the range of 1 and 10. We run denoising experiments on a variety of images and for a large range of noise standard deviations ($\sigma=20, 40, 60, 80, 100$). To reduce the bias due to a particular noise realization, each reported PSNR result is averaged over at least eight independent trials.

\subsection{Single Image Denoising}
We use six standard images of size $256 \times 256$, and six natural images of size $481 \times 321$ randomly chosen from \cite{Martin_Fowlkes_Tal_Malik_2001} for the single image denoising experiments.

\begin{figure*}[!]
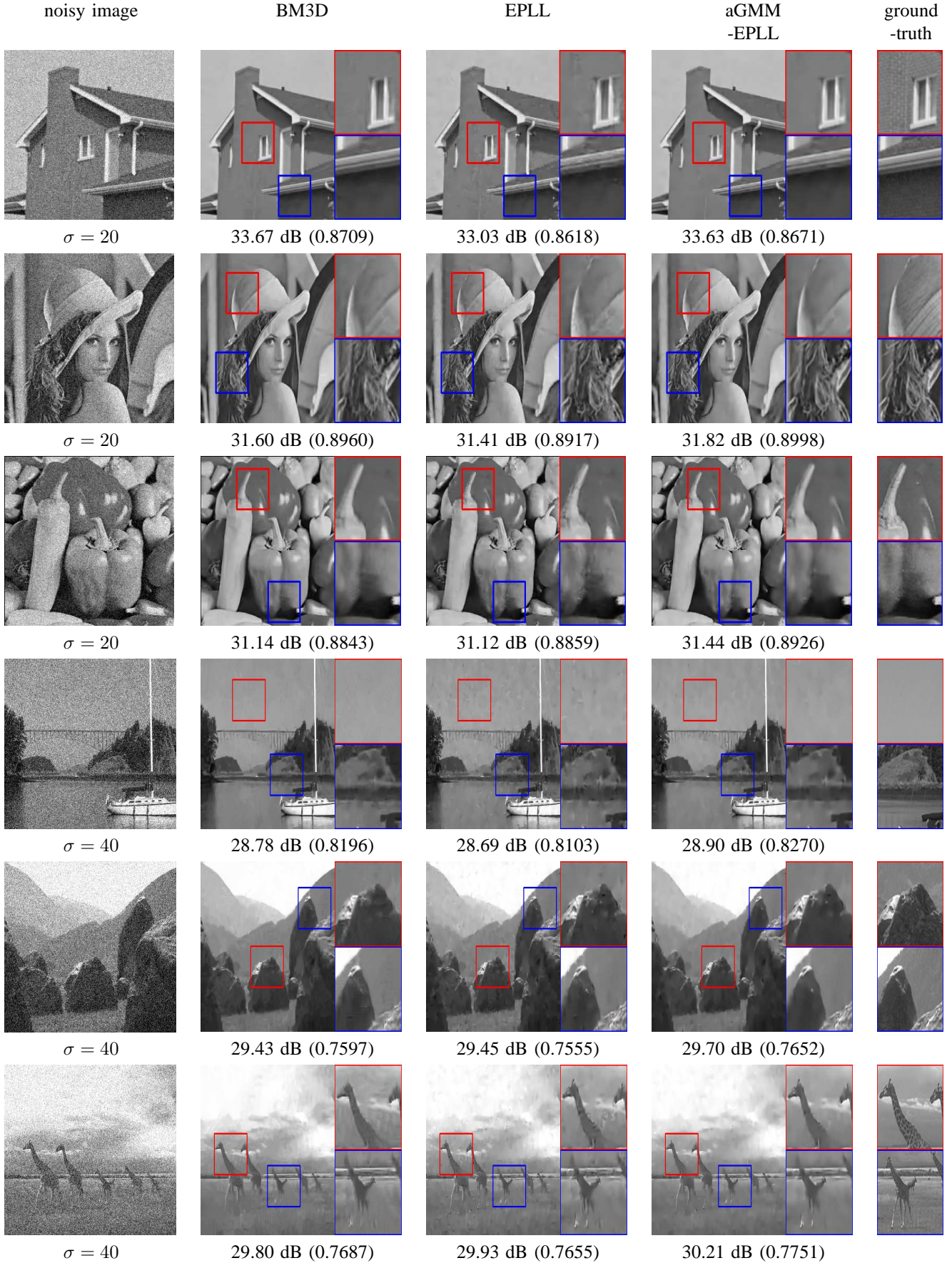

\centering
\def\ih{0.18\linewidth}
\def\iw{0.21\linewidth}
\def\sw{0.07\linewidth}
\begin{tabular}{cccccc}
noisy image & BM3D & EPLL & aGMM & ground\\
 & & & -EPLL & -truth\\
\includegraphics[height=\ih,width=\ih]{/single_standard_image_denoising/noisy_house.eps}&
\includegraphics[height=\ih,width=\iw]{/single_standard_image_denoising/BM3D_house.eps}&
\includegraphics[height=\ih,width=\iw]{/single_standard_image_denoising/EPLL_house.eps}&
\includegraphics[height=\ih,width=\iw]{/single_standard_image_denoising/EPLL_adapted_house.eps}&
\includegraphics[height=\ih,width=\sw]{/single_standard_image_denoising/zoomIn_house.eps}\\
$\sigma = 20$ & 33.67 dB (0.8709) & 33.03 dB (0.8618) & 33.63 dB (0.8671)\\

\includegraphics[height=\ih,width=\ih]{/single_standard_image_denoising/noisy_lena.eps}&
\includegraphics[height=\ih,width=\iw]{/single_standard_image_denoising/BM3D_lena.eps}&
\includegraphics[height=\ih,width=\iw]{/single_standard_image_denoising/EPLL_lena.eps}&
\includegraphics[height=\ih,width=\iw]{/single_standard_image_denoising/EPLL_adapted_lena.eps}&
\includegraphics[height=\ih,width=\sw]{/single_standard_image_denoising/zoomIn_lena.eps}\\
$\sigma = 20$ & 31.60 dB (0.8960) & 31.41 dB (0.8917) & 31.82 dB (0.8998)\\

\includegraphics[height=\ih,width=\ih]{/single_standard_image_denoising/noisy_peppers.eps}&
\includegraphics[height=\ih,width=\iw]{/single_standard_image_denoising/BM3D_peppers.eps}&
\includegraphics[height=\ih,width=\iw]{/single_standard_image_denoising/EPLL_peppers.eps}&
\includegraphics[height=\ih,width=\iw]{/single_standard_image_denoising/EPLL_adapted_peppers.eps}&
\includegraphics[height=\ih,width=\sw]{/single_standard_image_denoising/zoomIn_peppers.eps}\\
$\sigma = 20$ & 31.14 dB (0.8843) & 31.12 dB (0.8859) & 31.44 dB (0.8926)\\

\includegraphics[height=\ih,width=\ih]{/single_natural_image_denoising/noisy_im2.eps}&
\includegraphics[height=\ih,width=\iw]{/single_natural_image_denoising/BM3D_im2.eps}&
\includegraphics[height=\ih,width=\iw]{/single_natural_image_denoising/EPLL_im2.eps}&
\includegraphics[height=\ih,width=\iw]{/single_natural_image_denoising/EPLL_adapted_im2.eps}&
\includegraphics[height=\ih,width=\sw]{/single_natural_image_denoising/zoomIn_im2.eps}\\
$\sigma = 40$ & 28.78 dB (0.8196) & 28.69 dB (0.8103) & 28.90 dB (0.8270)\\

\includegraphics[height=\ih,width=\ih]{/single_natural_image_denoising/noisy_im4.eps}&
\includegraphics[height=\ih,width=\iw]{/single_natural_image_denoising/BM3D_im4.eps}&
\includegraphics[height=\ih,width=\iw]{/single_natural_image_denoising/EPLL_im4.eps}&
\includegraphics[height=\ih,width=\iw]{/single_natural_image_denoising/EPLL_adapted_im4.eps}&
\includegraphics[height=\ih,width=\sw]{/single_natural_image_denoising/zoomIn_im4.eps}\\
$\sigma = 40$ & 29.43 dB (0.7597) & 29.45 dB (0.7555) & 29.70 dB (0.7652)\\

\includegraphics[height=\ih,width=\ih]{/single_natural_image_denoising/noisy_im6.eps}&
\includegraphics[height=\ih,width=\iw]{/single_natural_image_denoising/BM3D_im6.eps}&
\includegraphics[height=\ih,width=\iw]{/single_natural_image_denoising/EPLL_im6.eps}&
\includegraphics[height=\ih,width=\iw]{/single_natural_image_denoising/EPLL_adapted_im6.eps}&
\includegraphics[height=\ih,width=\sw]{/single_natural_image_denoising/zoomIn_im6.eps}\\
$\sigma = 40$ & 29.80 dB (0.7687) & 29.93 dB (0.7655) & 30.21 dB (0.7751)\\
\end{tabular}
\caption{Single image denoising by using the denoised image for EM adaptation: Visual comparison and objective comparison (PSNR and SSIM in the parenthesis). The top three are standard images of size $256 \times 256$ while the bottom three are natural images of size $481 \times 321$.}
\label{fig:single image denoising for standard and natural images}
\end{figure*}

Figure \ref{fig:single image denoising for standard and natural images} shows the denoising results for three standard testing images and three natural testing images. In comparison to the competing methods, our proposed method yields the highest PSNR values. Magnified versions of the images are shown in \fref{fig:single image denoising for standard and natural images}.

In Table \ref{tbl:objective comparison for standard images of size 256x256}, we report PSNR for different noise variances for the standard images. Two key observations are noted:

First, comparing aGMM-EPLL with EPLL, the denoising results from aGMM-EPLL are consistently better than EPLL with an average gain of about 0.3 dB. This validates the usefulness of the adapted GMM through the proposed EM adaptation. For BM3D, we note that the prior model of BM3D (a non-linear transformation followed by a sparsity regularization) is fundamentally different from the EPLL model. Therefore, adapting a Gaussian mixture to a BM3D pre-filtered result does not necessarily improve the performance. In fact, aGMM-BM3D is better than BM3D at low noise, but worse at high noise.

Second, the quality of the image used for EM adaptation affects the final denoising performance. For example, using the ground-truth clean image for EM adaptation is much better than using the denoised images, such as the EPLL or BM3D denoised image. In some cases, aGMM-BM3D yields larger PSNR values than aGMM-EPLL due to the fact that the denoised image from BM3D is better than that from EPLL.

We validate our experiment by repeating 10 Monte Carlo trials. Consequently, we calculate the standard deviation across different noise realizations and conduct a $t$ test with the null hypothesis that aGMM-EPLL has equal performance to the original EPLL. If we reject the null hypothesis, then we conclude that aGMM-EPLL is statistically better than EPLL. Table \ref{PSNR_variability} shows the result of ``Peppers'' using 10 Monte Carlo trials. Except for the case when $\sigma = 80$, all \emph{p}-values are small, implying that the results by aGMM-EPLL are statistically better than those by EPLL.

In addition to comparing EPLL and BM3D, we also compare the performance of the proposed method when there is no generic prior. That is, we train a GMM purely from the noisy image. In this case, we collect the noisy patches from the image and apply EM algorithm to learn a GMM. The results, denoted as \emph{GMM-noisy}, are shown in Table~\ref{table:naive gmm}. The performance is undesirable, as we expected. For comparison, we adapt the generic GMM to the noisy image with Equations (25) and (26) and set $\widetilde{\sigma}^2 = \sigma^2$. In this case, the performance is improved significantly by the adaptation process, confirming the effectiveness of the method.

We compare the result with \cite{Lu_Lin_Jin_Yang_Wang_2015} in the last two columns of Table~\ref{table:naive gmm}. The result indicates that \cite{Lu_Lin_Jin_Yang_Wang_2015} performs consistently better than EPLL but slightly worse than the proposed aGMM-EPLL. One reason is that in our method, the pre-filtering has significantly reduced the noise level for better training.


\begin{table}[h]
\centering
\footnotesize
\[
\renewcommand{\arraystretch}{1.01}
\def\C{@{\hspace*{3.2ex}}c@{\hspace*{3.2ex}}}
\begin{array}{cc|cc||cc||c}
\hline
& & \text{BM3D} & \text{aGMM} & \text{EPLL} & \text{aGMM} & \text{aGMM}\\
& &             & \text{-BM3D}&             & \text{-EPLL}& \text{-clean} \\
\hline
\hline
\multirow{5}{*}{\text{Airplane}}
&	\sigma=20 	&	30.44	&	\textbf{30.77}	&	30.57	&	\textbf{30.87}	&	31.28	\\
& 	\sigma=40 	& 	26.45	& 	\textbf{27.09}	& 	27.00	& 	\textbf{27.16}	& 	27.48	\\
& 	\sigma=60 	& 	\textbf{25.15}	& 	25.09	& 	25.14	& 	\textbf{25.24}	& 	25.50	\\
& 	\sigma=80 	& 	\textbf{23.85}	& 	23.72	& 	23.74	& 	\textbf{23.83}	& 	24.00	\\
& 	\sigma=100 	& 	\textbf{22.82}	& 	22.60	& 	22.61	& 	\textbf{22.66}	& 	22.80	\\
\hline
\multirow{5}{*}{\text{Boat}}
&	\sigma=20 	&	29.69	&	\textbf{29.90}	&	29.83	&	\textbf{30.00}	&	30.39	\\
& 	\sigma=40 	& 	26.09	& 	\textbf{26.57}	& 	26.46	& 	\textbf{26.60}	& 	26.86	\\
& 	\sigma=60 	& 	24.58	& 	\textbf{24.65}	& 	24.69	& 	\textbf{24.77}	& 	25.01	\\
& 	\sigma=80 	& 	\textbf{23.40}	& 	23.36	& 	23.41	& 	\textbf{23.46}	& 	23.69	\\
& 	\sigma=100 	& 	\textbf{22.64}	& 	22.56	& 	22.58	& 	\textbf{22.61}	& 	22.76	\\
\hline
\multirow{5}{*}{\text{Cameraman}}
&	\sigma=20 	&	30.28	&	\textbf{30.33}	&	30.21	&	\textbf{30.38}	&	31.09	\\
& 	\sigma=40 	& 	26.78	& 	\textbf{27.29}	& 	26.96	& 	\textbf{27.25}	& 	27.76	\\
& 	\sigma=60 	& 	25.35	& 	\textbf{25.42}	& 	25.24	& 	\textbf{25.52}	& 	26.07	\\
& 	\sigma=80 	& 	\textbf{24.05}	& 	24.04	& 	23.90	& 	\textbf{24.14}	& 	24.66	\\
& 	\sigma=100 	& 	\textbf{23.05}	& 	22.88	& 	22.79	& 	\textbf{22.94}	& 	23.41	\\

\hline
\multirow{5}{*}{\text{House}}
&	\sigma=20 	&	33.67	&	\textbf{33.81}	&	33.03	&	\textbf{33.63}	&	34.33	\\
& 	\sigma=40 	& 	30.49	& 	\textbf{30.85}	& 	29.94	& 	\textbf{30.64}	& 	31.31	\\
& 	\sigma=60 	& 	\textbf{28.88}	& 	28.73	& 	27.97	& 	\textbf{28.57}	& 	29.19	\\
& 	\sigma=80 	& 	\textbf{27.12}	& 	26.95	& 	26.34	& 	\textbf{26.87}	& 	27.28	\\
& 	\sigma=100 	& 	\textbf{25.92}	& 	25.70	& 	25.33	& 	\textbf{25.67}	& 	26.01	\\
\hline
\multirow{5}{*}{\text{Lena}}
&	\sigma=20 	&	31.60	&	\textbf{31.76}	&	31.41	&	\textbf{31.82}	&	32.37	\\
& 	\sigma=40 	& 	27.83	& 	\textbf{28.18}	& 	27.98	& 	\textbf{28.25}	& 	28.62	\\
& 	\sigma=60 	& 	\textbf{26.36}	& 	26.16	& 	26.03	& 	\textbf{26.23}	& 	26.51	\\
& 	\sigma=80 	& 	\textbf{25.05}	& 	24.85	& 	24.70	& 	\textbf{24.91}	& 	25.12	\\
& 	\sigma=100 	& 	\textbf{23.88}	& 	23.76	& 	23.58	& 	\textbf{23.79}	& 	23.96	\\
\hline
\multirow{5}{*}{\text{Peppers}}
&	\sigma=20 	&	31.14	&	\textbf{31.40}	&	31.12	&	\textbf{31.44}	&	32.04	\\
& 	\sigma=40 	& 	27.42	& 	\textbf{28.00}	& 	27.70	& 	\textbf{28.03}	& 	28.43	\\
& 	\sigma=60 	& 	25.87	& 	\textbf{25.98}	& 	25.70	& 	\textbf{26.06}	& 	26.39	\\
& 	\sigma=80 	& 	24.43	& 	\textbf{24.56}	& 	24.25	& 	\textbf{24.64}	& 	24.92	\\
& 	\sigma=100 	& 	23.28	& 	\textbf{23.30}	& 	23.05	& 	\textbf{23.39}	& 	23.61	\\
\hline
 \text{Average}  &     & 26.59	&	\textbf{26.68}	&	26.44	&	\textbf{26.71}	&	27.09 \\
\end{array}
\]
\caption{Results for images of size $256 \times 256$. The PSNR value for each noise level is averaged over 8 independent trials to reduce the bias due to a particular noise realization.}
\label{tbl:objective comparison for standard images of size 256x256}
\end{table}

\begin{table}[h]
\centering
\footnotesize
\[
\renewcommand{\arraystretch}{1.01}
\def\C{@{\hspace*{3.2ex}}c@{\hspace*{3.2ex}}}
\begin{array}{c||cc|cc|c}
\hline
 & \text{EPLL} & \text{std.} & \text{aGMM-EPLL} &  \text{std.} & \text{\emph{p}-value}\\
\hline
\sigma = 20 & 29.51	& 0.11 & 29.84 & 0.12 & 1.0\mathrm{e}{-5} \\
\sigma = 40 & 25.82	& 0.10 & 26.04 & 0.10 & 1.3\mathrm{e}{-3}\\
\sigma = 60 & 23.65	& 0.12 & 23.80 & 0.10 & 1.5\mathrm{e}{-2}\\
\sigma = 80 & 22.34	& 0.14 & 22.46 & 0.16 & 1.1\mathrm{e}{-1}\\
\sigma = 100& 21.15	& 0.13 & 21.31 & 0.16 & 3.6\mathrm{e}{-2}\\
\hline
\end{array}
\]
\caption{PSNR variability for different denoising methods under different noise levels. The testing image is ``Peppers'' of size $128 \times 128$. The PSNR values and standard deviations are computed from 10 Monte-Carlo trials.}
\label{PSNR_variability}
\end{table}

\begin{table}[h]
\centering
\footnotesize
\[
\renewcommand{\arraystretch}{1.01}
\def\C{@{\hspace*{3.2ex}}c@{\hspace*{3.2ex}}}
\begin{array}{c||cc|ccc}
\hline
& \text{GMM} & \text{aGMM} & \text{EPLL} & \text{\cite{Lu_Lin_Jin_Yang_Wang_2015}}  & \text{aGMM} \\
& \text{-noisy} & \text{-noisy} &        &                                          &\text{-EPLL} \\
\hline
\hline
\sigma = 20 & 22.31 & 29.01	&	29.53	&	29.55	&	29.84\\
\sigma = 40 & 21.52 & 25.15	& 	25.85	& 	25.90	& 	26.03\\
\sigma = 60 & 20.40 & 23.01	& 	23.71	& 	23.80	& 	23.83\\
\sigma = 80 & 19.48 & 21.53	& 	22.22	& 	22.30	& 	22.36\\
\sigma = 100& 18.81 & 20.49	& 	21.14	& 	21.22	& 	21.31\\
\hline
\end{array}
\]
\caption{Comparison of the proposed method with different initialization and adaptation. GMM-noisy: Directly learn the GMM from the noisy image without adaptation. aGMM-noisy: Learn the GMM from the noisy image with adaptation. \cite{Lu_Lin_Jin_Yang_Wang_2015} and aGMM-EPLL: Learn GMMs from the pre-filtered results without and with compensating the remaining noise in the pre-filtered image, respectively. The test image is Peppers of size $128 \times 128$. PSNR is averaged over 10 Monte-Carlo trials.}
\label{table:naive gmm}
\vspace{-4ex}
\end{table}

In order to visually compare the improvement of the adapted GMM over the generic GMM, we randomly sample 100 patches from each GMM and show the results in Figure \ref{gmm patch}. The patches we display are the raw un-normalized patches. The gray scale value reflects the actual magnitude of a patch, and the occurrence of the patches indicates the likelihood of drawing that patch. Therefore, a significant portion of the patches are smooth as the GMM has peaks at smooth patches. The results show that the adapted GMM does have improved patch quality compared to the generic one.

\begin{figure}[!]
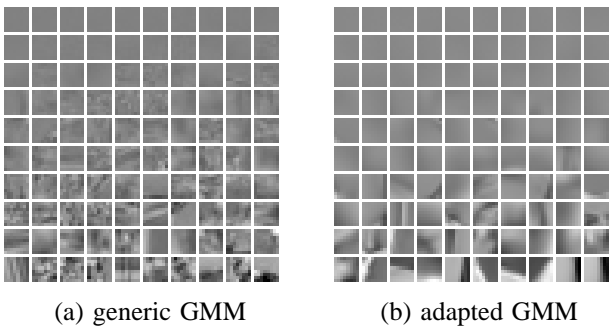

\centering
\def\iw{0.45\linewidth}
\begin{tabular}{ccc}
\includegraphics[width=\iw]{/visualization_gmm/GMM_patches_generic.eps}&
\includegraphics[width=\iw]{/visualization_gmm/GMM_patches_adaptedToDenoised.eps}\\
(a) generic GMM & (b) adapted GMM
\end{tabular}
\caption{Patches sampled from generic GMM and adapted GMM: The $100$ samples in (a) and (b) are randomly sampled and generated from the generic GMM and the adapted GMM, respectively. In this experiment, the generic GMM is adapted to a clean image, Peppers, to create the adapted GMM. All samples are of size $8 \times 8$.}
\label{gmm patch}
\end{figure}



\subsection{External Image Denoising}
In this subsection, we evaluate the denoising performance when an \emph{example} image is available for EM adaptation. An example image refers to a clean image and is relevant to the noisy image of interest. In \cite{Luo_Chan_Nguyen_2014, Luo_Chan_Nguyen_2015}, it is shown that obtaining reference images is feasible in some scenarios such as text images and face images. We consider the following three scenarios for our experiments:

\begin{enumerate}
\item Flower image denoising: We use the 102 flowers dataset from \cite{Nilsback_Zisserman_2008}, which consists of 102 different categories of flowers. We randomly pick one category and then sample two flower images: one as the testing image with additive i.i.d. Gaussian noise, and the other as the example image for the EM adaptation.

\item Face image denoising: We use the FEI face dataset from \cite{Thomaz_Giraldi_2010}, which consists of 100 aligned and frontal face images of size $260 \times 360$. We randomly pick one face image as the image of interest. We then randomly sample another image from the dataset and treat it as the example image for our EM adaptation.

\item Text image denoising: To prepare for this scenario, we randomly crop a $200 \times 200$ region from a document and add noise to it. We then crop another $200 \times 200$ region from a very different document and use it as the example image.
\end{enumerate}

In Figure \ref{fig:external image denoising: visual comparison}, we show the denoising results for the three different scenarios. As is shown, the example images in the second column are similar but differ from the testing images. We compare the three denoising methods. The major difference lies in how the default GMM is adapted: In EPLL, there is no EM adaptation, i.e., the default generic GMM is used for denoising. In aGMM-example, the default GMM is adapted to the example image while in aGMM-clean, the default GMM is adapted to the ground truth image. As observed, the oracle aGMM-clean yields the best denoising performance. aGMM-example outperforms the benchmark EPLL (generic GMM) denoising algorithm both visually and objectively. For example, on average, it is 0.28 dB better in the Flower scenario, 0.78 dB better in the Face scenario, and 1.57 dB better in the Text scenario.

In addition to adapting the generic GMM to the example image, we could also apply the EM algorithm on the example image to learn a new GMM. However, we argue that the new GMM would lead to over-fitting and, hence, poor performance for denoising. In Table \ref{comparison_of_different_GMMs} we show the PSNR results when we use different GMMs for denoising. As is observed, GMM-example, which learns a new GMM from the example image, is even worse than EPLL whose GMM is learned from a generic database. In contrast, our proposed aGMM-example, which adapts the generic GMM to the example image, gives the best performance consistently.

\begin{table}[h]
\centering
\footnotesize
\[
\renewcommand{\arraystretch}{1.01}
\def\C{@{\hspace*{3.2ex}}c@{\hspace*{3.2ex}}}
\begin{array}{c|ccc}
\hline
 & \text{EPLL} & \text{GMM} &  \text{aGMM}\\
 &             & \text{-example}& \text{-example}\\
\hline
\sigma = 20 & 33.09 \; (0.9556) & 32.07 \;(0.9489) & 33.39 \;(0.9604) \\
\sigma = 40 & 28.97 \; (0.9000) & 28.40 \;(0.8951) & 29.32 \;(0.9078) \\
\sigma = 60 & 26.97 \; (0.8485) & 26.55 \;(0.8447) & 27.24 \;(0.8579) \\
\sigma = 80 & 25.24 \; (0.8108) & 25.10 \;(0.8088) & 25.56 \;(0.8221) \\
\sigma = 100 & 24.27 \; (0.7790) & 24.16 \;(0.7755) & 24.52 \;(0.7886) \\
\hline
\end{array}
\]
\caption{External image denoising: PSNR and SSIM in the parenthesis. Two flower images are sampled from \cite{Nilsback_Zisserman_2008} with one being the testing image and the other being the example image. EPLL uses the generic GMM, GMM-example applies EM algorithm to the example image, and aGMM-example applies adaptation from the generic GMM to the example image.}
\label{comparison_of_different_GMMs}
\vspace{-4ex}
\end{table}

\begin{figure*}[!]
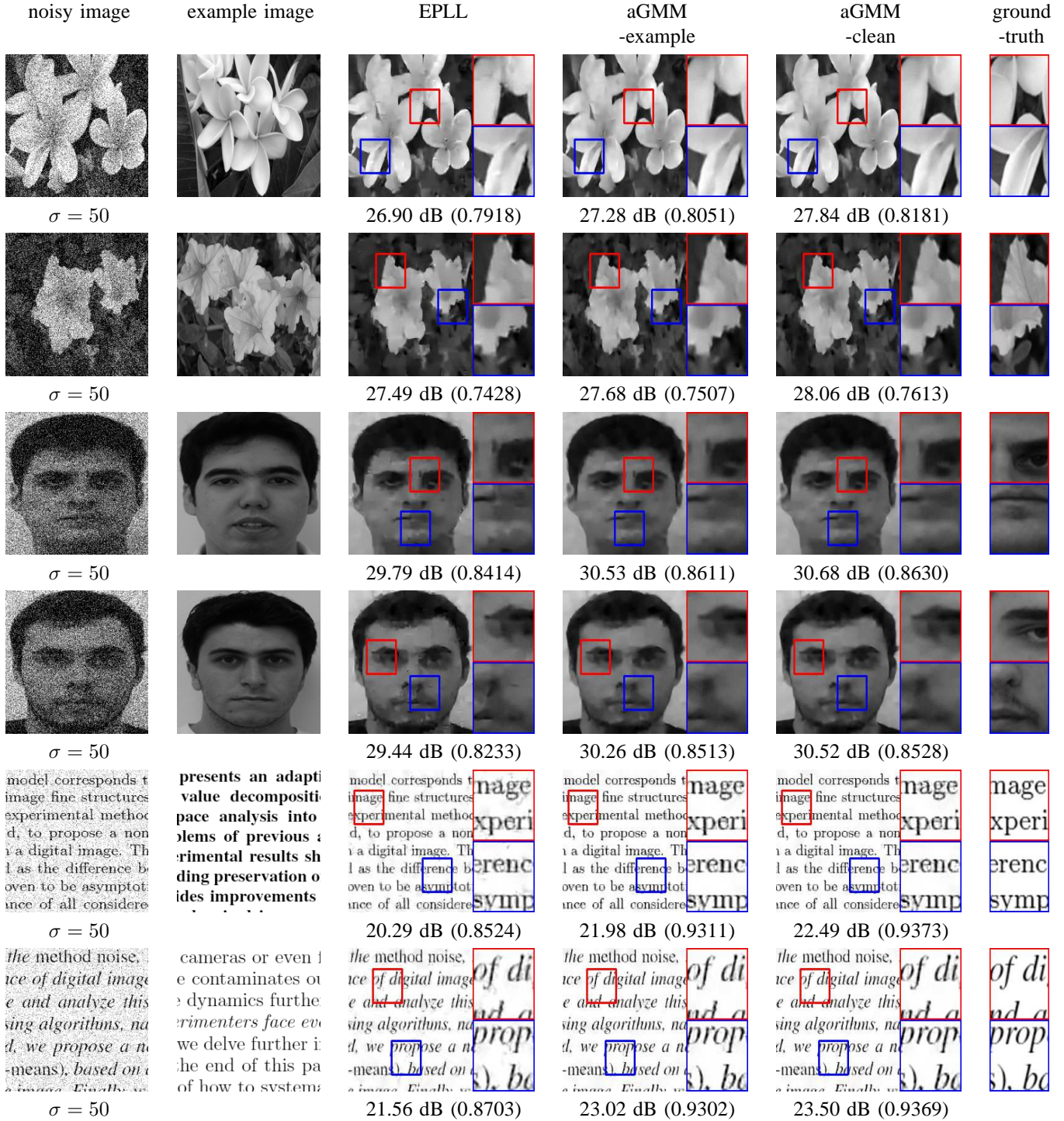

\centering
\def\ih{0.14\linewidth}
\def\iw{0.18\linewidth}
\def\sw{0.06\linewidth}
\begin{tabular}{cccccc}
noisy image & example image & EPLL & aGMM & aGMM & ground\\
 & & & -example & -clean & -truth\\
\includegraphics[height=\ih,width=\ih]{/external_images/flower1/noisy.eps}&
\includegraphics[height=\ih,width=\ih]{/external_images/flower1/example.eps}&
\includegraphics[height=\ih,width=\iw]{/external_images/flower1/EPLL_GMM.eps}&
\includegraphics[height=\ih,width=\iw]{/external_images/flower1/EPLL_aGMM_example.eps}&
\includegraphics[height=\ih,width=\iw]{/external_images/flower1/EPLL_aGMM_clean.eps}&
\includegraphics[height=\ih,width=\sw]{/external_images/flower1/zoomIn.eps}\\
$\sigma = 50$ &  & 26.90 dB (0.7918) & 27.28 dB (0.8051) & 27.84 dB (0.8181) &\\

\includegraphics[height=\ih,width=\ih]{/external_images/flower3/noisy.eps}&
\includegraphics[height=\ih,width=\ih]{/external_images/flower3/example.eps}&
\includegraphics[height=\ih,width=\iw]{/external_images/flower3/EPLL_GMM.eps}&
\includegraphics[height=\ih,width=\iw]{/external_images/flower3/EPLL_aGMM_example.eps}&
\includegraphics[height=\ih,width=\iw]{/external_images/flower3/EPLL_aGMM_clean.eps}&
\includegraphics[height=\ih,width=\sw]{/external_images/flower3/zoomIn.eps}\\
$\sigma = 50$ &  & 27.49 dB (0.7428) & 27.68 dB (0.7507) & 28.06 dB (0.7613) &\\

\includegraphics[height=\ih,width=\ih]{/external_images/face1/noisy.eps}&
\includegraphics[height=\ih,width=\ih]{/external_images/face1/example.eps}&
\includegraphics[height=\ih,width=\iw]{/external_images/face1/EPLL_GMM.eps}&
\includegraphics[height=\ih,width=\iw]{/external_images/face1/EPLL_aGMM_example.eps}&
\includegraphics[height=\ih,width=\iw]{/external_images/face1/EPLL_aGMM_clean.eps}&
\includegraphics[height=\ih,width=\sw]{/external_images/face1/zoomIn.eps}\\
$\sigma = 50$ &  & 29.79 dB (0.8414) & 30.53 dB (0.8611) & 30.68 dB (0.8630) &\\

\includegraphics[height=\ih,width=\ih]{/external_images/face3/noisy.eps}&
\includegraphics[height=\ih,width=\ih]{/external_images/face3/example.eps}&
\includegraphics[height=\ih,width=\iw]{/external_images/face3/EPLL_GMM.eps}&
\includegraphics[height=\ih,width=\iw]{/external_images/face3/EPLL_aGMM_example.eps}&
\includegraphics[height=\ih,width=\iw]{/external_images/face3/EPLL_aGMM_clean.eps}&
\includegraphics[height=\ih,width=\sw]{/external_images/face3/zoomIn.eps}\\
$\sigma = 50$ &  & 29.44 dB (0.8233) & 30.26 dB (0.8513) & 30.52 dB (0.8528) &\\

\includegraphics[height=\ih,width=\ih]{/external_images/text1/noisy.eps}&
\includegraphics[height=\ih,width=\ih]{/external_images/text1/example.eps}&
\includegraphics[height=\ih,width=\iw]{/external_images/text1/EPLL_GMM.eps}&
\includegraphics[height=\ih,width=\iw]{/external_images/text1/EPLL_aGMM_example.eps}&
\includegraphics[height=\ih,width=\iw]{/external_images/text1/EPLL_aGMM_clean.eps}&
\includegraphics[height=\ih,width=\sw]{/external_images/text1/zoomIn.eps}\\
$\sigma = 50$ &  & 20.29 dB (0.8524) & 21.98 dB (0.9311) & 22.49 dB (0.9373) &\\

\includegraphics[height=\ih,width=\ih]{/external_images/text3/noisy.eps}&
\includegraphics[height=\ih,width=\ih]{/external_images/text3/example.eps}&
\includegraphics[height=\ih,width=\iw]{/external_images/text3/EPLL_GMM.eps}&
\includegraphics[height=\ih,width=\iw]{/external_images/text3/EPLL_aGMM_example.eps}&
\includegraphics[height=\ih,width=\iw]{/external_images/text3/EPLL_aGMM_clean.eps}&
\includegraphics[height=\ih,width=\sw]{/external_images/text3/zoomIn.eps}\\
$\sigma = 50$ &  & 21.56 dB (0.8703) & 23.02 dB (0.9302) & 23.50 dB (0.9369) & \\
\end{tabular}
\caption{External image denoising by using an example image for EM adaptation: Visual comparison and objective comparison (PSNR and SSIM in the parenthesis). The flower images are from the 102 flowers dataset \cite{Nilsback_Zisserman_2008}, the face images are from the FEI face dataset \cite{Thomaz_Giraldi_2010}, and the text images are cropped from randomly chosen documents.
\label{fig:external image denoising: visual comparison}}
\end{figure*}

\subsection{Runtime}
Our current implementation is in MATLAB (single thread), and we use an Intel Core i7-3770 CPU with 8 GB RAM. The runtime is about 66 seconds to denoise an image of size $256 \times 256$, where the EM adaptation part takes about 14 seconds, while the MAP denoising part takes about 52 seconds. The EM adaptation utilizes the simplification \eref{eq:simplified covariance adaptation} in Section IV-D, which has significant speedup impact to the adaptation. The MAP denoising part has similar runtime as EPLL, which uses an external mixture model for denoising. As pre-filtering is considered, we note that BM3D takes approximately 0.25 seconds, and EPLL takes approximately 50 seconds.

\section{Conclusion}
We proposed an EM adaptation method to learn effective image priors. The proposed algorithm is rigorously derived from the Bayesian hyper-prior perspective and is further simplified to reduce the computational complexity. In the absence of the latent clean image, we proposed modifications of the algorithm and analyzed how some internal parameters can be automatically estimated. The adapted prior from the EM adaptation better captures the prior distribution of the image of interest and is consistently better than the un-adapted generic one. In the context of image denoising, experimental results demonstrate its superiority over some existing denoising algorithms, such as EPLL and BM3D. Future work includes its extended work on video denoising and other restoration tasks, such as deblurring and inpainting.

\newcommand\numberthis{\addtocounter{equation}{1}\tag{\theequation}}

\appendix

\subsection{Proof of Proposition \ref{proposition:hyper-prior for the reproducing density}}
\label{appendix:proof,hyper-prior for the reproducing density}
\begin{proof}
We first compute the probability that the $i$-th sample belongs to the $k$-th Gaussian component as
\begin{equation}
\gamma_{ki} = \frac{\pi_k^{(m)} \calN(\widetilde{\vp}_i \,|\,\vmu_k^{(m)}, \mSigma_k^{(m)})}{\sum_{l=1}^K \pi_l^{(m)} \calN(\widetilde{\vp}_i \,|\,\vmu_l^{(m)}, \mSigma_l^{(m)})},
\end{equation}
where $\{(\pi_k^{(m)}, \vmu_k^{(m)}, \mSigma_k^{(m)})\}_{k=1}^K$ are the GMM parameters in the $m$-th iteration and let $n_k \bydef\sum_{i=1}^n \gamma_{ki}$. We can then approximate $\log{ f(\widetilde{\vp}_1,\ldots,\widetilde{\vp}_n) | \widetilde{\mTheta})}$ in \eref{hyper-prior: MAP estimation problem} by the Q function as follows
\begin{align}
Q(\widetilde{\mTheta} | \widetilde{\mTheta}^{(m)}) =& \sum_{i = 1}^n \, \sum_{k = 1}^K\; \gamma_{ki} \log{\left( \widetilde{\pi}_k \calN(\widetilde{\vp}_i | \widetilde{\vmu}_k, \widetilde{\mSigma}_k) \right)} \nonumber\\
 \doteq &\sum_{i = 1}^n \, \sum_{k = 1}^K\; \gamma_{ki} \Big( \log{\widetilde{\pi}_k} - \frac{1}{2} \log{| \widetilde{\mSigma}_k|} \nonumber\\
 & - \frac{1}{2} (\widetilde{\vp}_i - \widetilde{\vmu}_k)^T \widetilde{\mSigma}_k^{-1}(\widetilde{\vp}_i - \widetilde{\vmu}_k)\Big) \nonumber\\
 = &\sum_{k = 1}^K n_k ( \log{\widetilde{\pi}_k} - \frac{1}{2}\log{|\widetilde{\mSigma}_k|} ) \nonumber\\
 & - \frac{1}{2} \sum_{k = 1}^K \, \sum_{i = 1}^n\; \gamma_{ki} (\widetilde{\vp}_i - \widetilde{\vmu}_k)^T \widetilde{\mSigma}_k^{-1}(\widetilde{\vp}_i - \widetilde{\vmu}_k),
\label{Q function}
\end{align}
where $\doteq$ indicates that some constant terms that are irrelevant to the parameters $\widetilde{\mTheta}$ are dropped.

We further define two notations
\begin{align*}
\bar{\vmu}_k \bydef \frac{1}{n_k} \sum_{i = 1}^n \gamma_{ki} \widetilde{\vp}_i, \quad \mS_k \bydef \sum_{i = 1}^n\gamma_{ki}(\widetilde{\vp}_i - \bar{\vmu}_k)(\widetilde{\vp}_i - \bar{\vmu}_k)^T.
\end{align*}
Using the equality
\begin{align*}
&\sum_{i = 1}^n \gamma_{ki}(\widetilde{\vp}_i - \widetilde{\vmu}_k)^T \widetilde{\mSigma}_k^{-1}(\widetilde{\vp}_i - \widetilde{\vmu}_k) \\
&= n_k(\widetilde{\vmu}_k - \bar{\vmu}_k)^T \widetilde{\mSigma}_k^{-1}(\widetilde{\vmu}_k - \bar{\vmu}_k) + \mbox{tr}(\mS_k \widetilde{\mSigma}_k^{-1}),
\end{align*}
we can rewrite the Q function as follows
\begin{align*}
Q(\widetilde{\mTheta} | \widetilde{\mTheta}^{(m)}) =& \sum_{k = 1}^K \big\{ n_k (\log{\widetilde{\pi}_k - \frac{1}{2}\log{|\widetilde{\mSigma}_k|}})\\
& \hspace{-6ex} - \frac{n_k}{2}(\widetilde{\vmu}_k - \bar{\vmu}_k)^T \widetilde{\mSigma}_k^{-1}(\widetilde{\vmu}_k - \bar{\vmu}_k) - \frac{1}{2}\mbox{tr}(\mS_k \widetilde{\mSigma}_k^{-1}) \big\}.
\end{align*}
Therefore, we have
\begin{align}
&f(\widetilde{\mTheta} | \widetilde{\vp}_1,\ldots,\widetilde{\vp}_n)  \propto \; \mbox{exp}\big(Q(\widetilde{\mTheta} | \widetilde{\mTheta}^{(m)}) + \log{f(\widetilde{\mTheta})} \big) \nonumber\\
= &\; f(\widetilde{\mTheta}) \prod_{k = 1}^K \Big\{ \widetilde{\pi}_k^{n_k}\, |\widetilde{\mSigma}_k|^{-n_k / 2} \nonumber\\
& \mbox{exp}\big( - \frac{n_k}{2}(\widetilde{\vmu}_k - \bar{\vmu}_k)^T \widetilde{\mSigma}_k^{-1}(\widetilde{\vmu}_k - \bar{\vmu}_k) - \frac{1}{2}\mbox{tr}(\mS_k \widetilde{\mSigma}_k^{-1}) \big) \Big\} \nonumber \\
=& \; \prod_{k = 1}^K \Big\{ \widetilde{\pi}_k^{v_k + n_k - 1} |\widetilde{\mSigma}_k|^{-(\varphi_k + n_k + d + 2)/2} \mbox{exp}\Big( -\frac{\tau_k + n_k}{2} \nonumber\\
&  (\widetilde{\vmu}_k - \frac{\tau_k\vvartheta_k + n_k\bar{\vmu}_k}{\tau_k + n_k})^T \widetilde{\mSigma}_k^{-1}(\widetilde{\vmu}_k - \frac{\tau_k\vvartheta_k + n_k\bar{\vmu}_k}{\tau_k + n_k}) \Big) \nonumber\\
& \mbox{exp} \Big( -\frac{1}{2}\mbox{tr} ( ( \mPsi_k + \mS_k \notag \\
& + \frac{\tau_k n_k}{\tau_k + n_k} (\vvartheta_k - \bar{\vmu}_k)(\vvartheta_k - \bar{\vmu}_k)^T ) \widetilde{\mSigma}_k^{-1} ) \Big)\Big\}.
\end{align}
Defining $v_k' \bydef v_k + n_k,
\varphi_k' \bydef \varphi_k + n_k ,
\tau_k' \bydef \tau_k + n_k ,
\vvartheta_k' \bydef \frac{\tau_k\vvartheta_k + n_k\bar{\vmu}_k}{\tau_k + n_k}$, and $
\mPsi_k' \bydef \mPsi_k + \mS_k + \frac{\tau_k n_k}{\tau_k + n_k} (\vvartheta_k - \bar{\vmu}_k)(\vvartheta_k - \bar{\vmu}_k)^T$, we will get
\begin{equation*}
\begin{array}{l}
f(\widetilde{\mTheta} | \widetilde{\vp}_1,\ldots,\widetilde{\vp}_n) \propto \prod_{k = 1}^K \Big\{\widetilde{\pi}_k^{v_k' - 1}|\widetilde{\mSigma}_k|^{-(\varphi_k' + d + 2)/ 2} \\
\mbox{exp}\big( -\frac{\tau_k'}{2} \left( \widetilde{\vmu}_k - \vvartheta_k' \right)^T  \widetilde{\mSigma}_k^{-1}\left( \widetilde{\vmu}_k - \vvartheta_k' \right) - \frac{1}{2} \mbox{tr}( \mPsi_k' \widetilde{\mSigma}_k^{-1} ) \big) \Big\},
\end{array}
\end{equation*}
which completes the proof.
\end{proof}

\subsection{Proof of Proposition \ref{proposition: solution to optimization for mTheta}}
\label{appendix:proof,solution to optimization for mTheta}
\begin{proof}
We ignore some irrelevant terms and get $\log{f(\widetilde{\mTheta} | \widetilde{\vp}_1,\ldots,\widetilde{\vp}_n)} \doteq \sum_{k = 1}^K \{ (v_k' - 1)\log{\widetilde{\pi}_k} - \frac{(\varphi_k' + d + 2)}{2} \log{|\widetilde{\mSigma}_k|} -\frac{\tau_k'}{2} ( \widetilde{\vmu}_k - \vvartheta_k' )^T  \widetilde{\mSigma}_k^{-1}( \widetilde{\vmu}_k - \vvartheta_k') - \frac{1}{2} \mbox{tr}( \mPsi_k' \widetilde{\mSigma}_k^{-1} ) \}$. Taking derivatives with respect to $\widetilde{\pi}_k$, $\widetilde{\vmu}_k$ and $\widetilde{\mSigma}_k$ will yield the following solutions.
\begin{itemize}
\item Solution to $\widetilde{\pi}_k$.

We form the Lagrangian
\begin{equation*}
J(\widetilde{\pi}_k, \lambda) = \sum_{k = 1}^K (v_k' - 1)\log{\widetilde{\pi}_k} + \lambda \left(\sum_{k = 1}^K \widetilde{\pi}_k - 1 \right),
\end{equation*}
and the optimal solution satisfies
\begin{equation*}
\frac{\partial J}{\partial \widetilde{\pi}_k} = \frac{v_k' - 1}{\widetilde{\pi}_k} + \lambda = 0.
\end{equation*}
It is easy to see that
$\lambda = -\sum_{k = 1}^K (v_k' - 1)$, and thus the solution to $\widetilde{\pi}_k$ is
\begin{align}
\widetilde{\pi}_k =& \frac{v_k' - 1}{\sum_{k = 1}^K (v_k' - 1)} = \frac{(v_k - 1) + n_k}{(\sum_{k = 1}^K v_k - K) + n} \nonumber \\
=& \frac{n}{(\sum_{k = 1}^K v_k - K) + n} \cdot \frac{n_k}{n} \nonumber\\
& + \frac{\sum_{k = 1}^K v_k - K}{(\sum_{k = 1}^K v_k - K) + n} \cdot \frac{v_k - 1}{\sum_{k = 1}^K v_k - K}.
\end{align}

\item Solution to $\widetilde{\vmu}_k$.

We let
\begin{equation}
\frac{\partial L}{\partial \widetilde{\vmu}_k} = -\frac{\tau_k'}{2}\widetilde{\mSigma}_k^{-1}(\widetilde{\vmu}_k - \vvartheta_k') = 0,
\end{equation}
 of which the solution is
\begin{align}
\widetilde{\vmu}_k 
=& \frac{1}{\tau_k + n_k}\sum_{i = 1}^n \gamma_{ki} \widetilde{\vp}_i + \frac{\tau_k}{\tau_k + n_k}\vvartheta_k.
\end{align}

\item Solution to $\widetilde{\mSigma}_k$.

We let
\begin{align}
\frac{\partial L}{\partial \widetilde{\mSigma}_k} =&  -\frac{\varphi_k' + d + 2}{2} \widetilde{\mSigma}_k^{-1} +  \frac{1}{2}\widetilde{\mSigma}_k^{-1} \mPsi_k' \widetilde{\mSigma}_k^{-1} \nonumber\\
&+ \frac{\tau_k'}{2} \widetilde{\mSigma}_k^{-1}( \widetilde{\vmu}_k - \vvartheta_k')( \widetilde{\vmu}_k - \vvartheta_k')^T \widetilde{\mSigma}_k^{-1} = 0, \nonumber
\end{align}
which yields
\begin{equation}
(\varphi_k' + d + 2)\widetilde{\mSigma}_k = \mPsi_k' + \tau_k'( \widetilde{\vmu}_k - \vvartheta_k')( \widetilde{\vmu}_k - \vvartheta_k')^T.
\end{equation}
Thus, the solution is
\begin{align}
\widetilde{\mSigma}_k =& \frac{\mPsi_k' + \tau_k'( \widetilde{\vmu}_k - \vvartheta_k')( \widetilde{\vmu}_k - \vvartheta_k')^T}{\varphi_k' + d + 2} \nonumber\\
=&\; \frac{\mPsi_k + \tau_k (\widetilde{\vmu}_k - \vvartheta_k)(\widetilde{\vmu}_k - \vvartheta_k)^T}{\varphi_k + d + 2 + n_k} \nonumber \\
&\hspace{-5ex} + \frac{n_k (\widetilde{\vmu}_k - \bar{\vmu}_k)(\widetilde{\vmu}_k - \bar{\vmu}_k)^T + \mS_k}{\varphi_k + d + 2 + n_k} \nonumber \\
=&\; \frac{n_k}{\varphi_k + d + 2 + n_k} \frac{1}{n_k} \sum_{i = 1}^n \gamma_{ki}(\widetilde{\vp}_i - \widetilde{\vmu}_k)(\widetilde{\vp}_i - \widetilde{\vmu}_k)^T \nonumber \\
&\hspace{-5ex} + \frac{1}{\varphi_k + d + 2 + n_k}\left( \mPsi_k + \tau_k(\vvartheta_k - \widetilde{\vmu}_k)(\vvartheta_k - \widetilde{\vmu}_k)^T \right). \notag
\end{align}
\end{itemize}
\end{proof}

\section{Appendix}
\subsection{Proof of Proposition \ref{proposition:simplified covariance adaptation}}
\label{appendix:proof,simplified covariance adaptation}
\begin{proof}
We expand the first term in \eref{covariance adaptation}.
\begin{align}
& \alpha_k\frac{1}{n_k} \sum_{i=1}^n\gamma_{ki}(\widetilde{\vp}_i - \widetilde{\vmu}_k)(\widetilde{\vp}_i - \widetilde{\vmu}_k)^T \nonumber\\
= & \;\alpha_k \frac{1}{n_k}\sum_{i=1}^n \gamma_{ki}(\widetilde{\vp}_i\widetilde{\vp}_i^T-\widetilde{\vp}_i\widetilde{\vmu}_k^T - \widetilde{\vmu}_k\widetilde{\vp}_i^T+\widetilde{\vmu}_k\widetilde{\vmu}_k^T) \nonumber\\
\triangleq & \;\alpha_k \frac{1}{n_k}\sum_{i=1}^n \gamma_{ki}\widetilde{\vp}_i\widetilde{\vp}_i^T -(\widetilde{\vmu}_k-(1-\alpha_k)\vmu_k)\widetilde{\vmu}_k^T \nonumber\\
& - \widetilde{\vmu}_k(\widetilde{\vmu}_k - (1-\alpha_k)\vmu_k)^T+\alpha_k\widetilde{\vmu}_k\widetilde{\vmu}_k^T \nonumber\\
= & \;\alpha_k \frac{1}{n_k}\sum_{i=1}^n \gamma_{ki}\widetilde{\vp}_i\widetilde{\vp}_i^T - 2 \widetilde{\vmu}_k\widetilde{\vmu}_k^T \nonumber\\
& + (1-\alpha_k)(\vmu_k\widetilde{\vmu}_k^T+\widetilde{\vmu}_k\vmu_k^T) + \alpha_k\widetilde{\vmu}_k\widetilde{\vmu}_k^T,
\label{term 1 in covariance adaptation}
\end{align}
where $\triangleq$ holds because $\alpha_k\frac{1}{n_k}\sum_{i=1}^n \gamma_{ki}\widetilde{\vp}_i = \widetilde{\vmu}_k - (1-\alpha_k)\vmu_k$ from \eref{mean adaptation}. We then expand the second term in \eref{covariance adaptation}
\begin{align}
&\;(1 - \alpha_k) \left( \mSigma_k + (\vmu_k - \widetilde{\vmu}_k)(\vmu_k - \widetilde{\vmu}_k)^T \right) \nonumber \\
= & \;(1-\alpha_k)(\mSigma_k + \vmu_k\vmu_k^T + \widetilde{\vmu}_k\widetilde{\vmu}_k^T) \notag \\
& \; - (1-\alpha_k)(\vmu_k\widetilde{\vmu}_k^T + \widetilde{\vmu}_k\vmu_k^T). 
\label{term 2 in covariance adaptation}
\end{align}
Combining \eref{term 1 in covariance adaptation} and \eref{term 2 in covariance adaptation} completes the proof.
\end{proof}

\bibliographystyle{IEEEbib}
\bibliography{reference}
\end{document}